
\documentclass{sig-alternate}

    \usepackage{xcolor}
    \usepackage[colorlinks=true,linkcolor=blue]{hyperref}
\usepackage{graphicx}
\usepackage{cite}

\usepackage{amssymb,amsmath}
\DeclareMathOperator*{\diag}{diag}
\DeclareMathOperator{\Tr}{Tr}
\DeclareMathOperator{\Cond}{Cond}
\DeclareMathOperator{\vect}{vec}
\newcommand{\EX}[1]{\mathbb{E}_{X\sim P_{\theta}}\left[#1\right]}
\newcommand{\E}{\mathbb{E}}
\newcommand{\R}{\mathbb{R}}
\newcommand{\B}{\mathcal{B}}
\newcommand{\X}{\mathbb{X}}
\newcommand{\FM}{\mathcal{I}}
\newcommand{\FIM}{\FM_{\theta}}
\newcommand{\levelset}[3]{\{ #3: #1(#3) \leq #1(#2)\}}
\newcommand{\trans}{\mathrm{T}}
\newcommand{\norm}[1]{\lVert #1 \rVert}
\newcommand{\abs}[1]{\lvert #1 \rvert}

\newcommand{\asto}{\stackrel{\mathrm{a.s.}}{\to}}

\renewcommand{\geq}{\geqslant}
\renewcommand{\leq}{\leqslant}

\renewcommand{\epsilon}{\varepsilon}
\renewcommand{\phi}{\varphi}

\newtheorem{theorem}{Theorem}
\newtheorem{proposition}[theorem]{Proposition}

\newdef{remark}{Remark}
\newdef{definition}{Definition}
\newdef{assumption}{Assumption}

\newcommand{\etacmin}{\gamma_{C,\mathrm{min}}}
\newcommand{\etammin}{\gamma_{m,\mathrm{min}}}
\newcommand{\etakmin}{\gamma_{\kappa,\mathrm{min}}}
\newcommand{\etaklim}{\gamma_{\kappa,\mathrm{lim}}}
\newcommand{\etaclim}{\gamma_{C,\mathrm{lim}}}
\newcommand{\muw}{\mu_{w}}

\newcommand{\lln}{LLN}
\providecommand{\mea}{\mu}
\providecommand{\leb}{\mu_\mathrm{Leb}}
\providecommand{\borel}{\mathcal{F}}
\providecommand{\Vf}{V_{f}}
\providecommand{\Jf}{J}
\providecommand{\dd}{\mathrm{d}}
\providecommand{\Vfh}{\widehat{\Vf}}
\newcommand{\bnu}{\Vf}
\newcommand{\hnu}[1]{\Vfh}
\newcommand{\tnabla}{\tilde \nabla}

\begin{document}
%
\conferenceinfo{GECCO'12,} {July 7-11, 2012, Philadelphia, Pennsylvania, USA.}
    \CopyrightYear{2012}
    \crdata{978-1-4503-1177-9/12/07}
    \clubpenalty=10000
    \widowpenalty = 10000

\title{Analysis of a Natural Gradient Algorithm on Monotonic Convex-Quadratic-Composite Functions
}
%
%
%
%
%

\numberofauthors{1} 
%
\author{
%
%
\alignauthor
Youhei Akimoto\\
       \affaddr{Project TAO INRIA Saclay, LRI Universit{\'e} Paris-Sud, 91405 Orsay Cedex, France}\\
       \email{Youhei.Akimoto@lri.fr}
}


\maketitle
\begin{abstract}
In this paper we investigate the convergence properties of a variant of the Covariance Matrix Adaptation Evolution Strategy (CMA-ES). Our study is based on the recent theoretical foundation that the pure rank-$\mu$ update CMA-ES performs the natural gradient descent on the parameter space of Gaussian distributions. We derive a novel variant of the natural gradient method where the parameters of the Gaussian distribution are updated along the natural gradient to improve a newly defined function on the parameter space. We study this algorithm on composites of a monotone function with a convex quadratic function. We prove that our algorithm adapts the covariance matrix so that it becomes proportional to the inverse of the Hessian of the original objective function. We also show the speed of covariance matrix adaptation and the speed of convergence of the parameters. We introduce a stochastic algorithm that approximates the natural gradient with finite samples and present some simulated results to evaluate how precisely the stochastic algorithm approximates the deterministic, ideal one under finite samples and to see how similarly our algorithm and the CMA-ES perform.
\end{abstract}

\category{G.1.6}{Numerical Analysis}{Optimization}[Global optimization, Gradient methods, Unconstrained optimization]
\category{F.2.1}{Analysis of Algorithms and Problem Complexity}{Numerical Algorithms and Problems}

\terms{Algorithms, Theory}

\keywords{Covariance Matrix Adaptation, Natural Gradient, Hessian Matrix, Information Geometric Optimization, Theory}

\newpage
\section*{Errata}
{\small 
{\bf \noindent Errata in the original GECCO'2012 paper (which have been revised in this version)}
\begin{itemize}
 \item Eq.~\eqref{eq:upper-cond} in Theorem~4 \\
{\em In the original paper:}\\
\begin{equation*} 
\Cond(C^{t} A) \leq 1 + \left(\frac{1 - 2\etacmin}{1 - \etacmin}\right)^{t}(\Cond(C^{0} A - 1).
\end{equation*}

{\em In this version:}\\
\begin{equation*}
\Cond(C^{t} A) \leq 1 + (1 - \etacmin)^{t}(\Cond(C^{0} A) - 1). \label{eq:upper-cond}
\end{equation*}

\item The paragraph after \eqref{eq:cond-eq3}\\
{\em In the original paper:} \\
Since
\begin{equation*}
\frac{1 - \eta_{C}^t(\lambda_{1}^{t} + \lambda_{d}^{t})}{(1 - \eta_{C}^t\lambda_{d}^{t})} = \frac{1-\eta_{C}^t \lambda_{1}^{t}(1 + \lambda_{d}^{t}/\lambda_{1}^{t})}{1 - \eta_{C}^t \lambda_{1}^{t} (\lambda_{d}^{t}/\lambda_{1}^{t})} \leq \frac{1 - 2\eta_{C}^t \lambda_{1}^{t}}{1 - \eta_{C}^t \lambda_{1}^{t}},
\end{equation*}
we have
\begin{equation*}
\frac{\Cond(C^{t+1} A) - 1}{\Cond(C^{t} A) - 1} \leq \frac{1 - 2\eta_{C}^t \lambda_{1}^{t}}{1 - \eta_{C}^t \lambda_{1}^{t}}. 
\end{equation*}
Moreover, since the right-hand side of the above inequality is maximized when $\eta_{C}^t \lambda_{1}^{t}$ is minimized and $\eta_{C}^{t}\lambda_{1}^{t}$ is bounded from below by $\etacmin$ because of \eqref{eq:asmc2}, we have
\begin{equation*}
\frac{\Cond(C^{t+1} A) - 1}{\Cond(C^{t} A) - 1} \leq \frac{1 - 2\etacmin}{1 - \etacmin}.
\end{equation*}
This implies $\lim_{t\to\infty}\Cond(C^{t} A) = 1$. The rate of convergence \eqref{eq:rate-cond} and the upper bound \eqref{eq:upper-cond} are immediate consequences of the above inequality. 

{\em In this version:}\\
Since
\begin{equation*}
\frac{1 - \eta_{C}^t(\lambda_{1}^{t} + \lambda_{d}^{t})}{(1 - \eta_{C}^t\lambda_{d}^{t})} = 1 - \frac{\eta_{C}^t \lambda_{1}^{t}}{1 - \eta_{C}^t \lambda_{1}^{t} \Cond^{-1}(C^t A)} \leq 1 - \eta_{C}^t \lambda_{1}^{t}
\end{equation*}
and the right-most side of the above inequality is bounded by $1 - \etacmin$ because of \eqref{eq:asmc2}, we have
\begin{equation*}
\frac{\Cond(C^{t+1} A) - 1}{\Cond(C^{t} A) - 1} \leq 1 - \etacmin. \label{eq:cond-im}
\end{equation*}
This implies $\lim_{t\to\infty}\Cond(C^{t} A) = 1$ and \eqref{eq:upper-cond}. Moreover, since $\lim_{t\to\infty}\Cond(C^{t} A) = \lim_{t\to\infty} \lambda_1^{t}/ \lambda_d^{t} = 1$, we have from \eqref{eq:cond-eq3} that 
\begin{align*}
\limsup_{t \to \infty} \frac{\Cond(C^{t+1} A) - 1}{\Cond(C^{t} A) - 1} &= \limsup_{t\to\infty }\frac{1 - 2 \eta_{C}^{t}\lambda_{1}^{t}}{1 - \eta_{C}^{t}\lambda_{1}^{t}} \\
&\leq \frac{1 - 2\etacmin}{1 - \etacmin}.
\end{align*}
This proves \eqref{eq:rate-cond}.
\end{itemize}
}
\newpage

\section{Introduction}
The Covariance Matrix Adaptation Evolution Strategy (CMA-ES, \cite{Hansen2001ec,Hansen2003ec,Hansen2004ppsn}) is a stochastic search algorithm for non-separable and ill-conditioned black-box continuous optimization. In the CMA-ES, search points are generated from a Gaussian distribution and the mean vector and the covariance matrix of the Gaussian distribution are adapted by using the sampled points and their objective value ranking. These parameters' update rules are designed so as to enhance the probability of generating superior points in the next iteration in a way similar to but slightly different from the (weighted) maximum likelihood estimation. Adaptive-ESs including the CMA-ES are successfully applied in practice. However, their theoretical analysis even on a simple function is complicated and linear convergence has been proven only for simple algorithms compared to the CMA-ES \cite{Auger2005tcs,Jagerskupper2007tcs}.

Resent studies \cite{Akimoto:2010vd,Glasmachers2010gecco} demonstrate the link between the parameter update rules in the CMA-ES and the natural gradient method, the latter of which is the steepest ascent/descent method on a Riemannian manifold and is often employed in machine learning \cite{Peters2008nc,Bhatnagar2009aut,Rattray1998prl,Amari1998nc,Park2000nn,Amari2000nc}. The natural gradient view of the CMA-ES has been developed and extended in \cite{Arnold2011arxiv} and the Information-Geometric Optimization (IGO) algorithm has been introduced as the unified framework of natural gradient based stochastic search algorithms. Given a family of probability distributions parameterized by $\theta \in \Theta$, the IGO transforms the original objective function, $f$, to a fitness function, $\Jf$, defined on $\Theta$. The IGO algorithm performs a natural gradient ascent aiming at maximizing $\Jf$. For the family of Gaussian distributions, the IGO algorithm recovers the pure rank-$\mu$ update CMA-ES \cite{Hansen2003ec}, for the family of Bernoulli distributions, PBIL \cite{Chandy:1995ux} is recovered. The IGO algorithm can be viewed as the deterministic model of a recovered stochastic algorithm in the limit of the number of sample points going to infinity.

The IGO offers a mathematical tool for analyzing the behavior of stochastic algorithms. In this paper, we analyze the behavior of the deterministic model of the pure rank-$\mu$ update CMA-ES, which is slightly different from the IGO algorithm. We are interested in knowing what is the target matrix of the covariance matrix update and how fast the covariance matrix learns the target. The CMA is designed to solve ill-conditioned objective function efficiently by adapting the metric---covariance matrix in the CMA-ES---as well as other variable metric methods such as quasi-Newton methods \cite{NocedalBOOK2006}. Speed of optimization depends on the precision and the speed of metric adaptation to a great extend. There is a lot of empirical evidence that the covariance matrix tends to be proportional to the inverse of the Hessian matrix of the objective function in the CMA-ES. However, it has not been mathematically proven yet. We are also interested in the speed of convergence of the mean vector and the covariance matrix. Convergence of the CMA-ES has not been reported up to this time. We tackle these issues in this work.

In this paper, we derive a novel natural gradient algorithm in a similar way to the IGO algorithm, where the objective function $f$ is transformed to a function $\Jf$ in a different way from the IGO so that we can derive the explicit form of the natural gradient for composite functions of a strictly increasing function and a convex quadratic function. We call the composite functions \textit{monotonic convex-quadratic-composite} functions. The resulting algorithm inherits important properties of the IGO and the CMA-ES, such as invariance under monotone transformation of the objective function and invariance under affine transformation of the search space. We theoretically study this natural gradient method on monotonic convex-quadratic-composite functions. We prove that the covariance matrix adapts to be proportional to the inverse of the Hessian matrix of the objective function. We also investigate the speed of the covariance matrix adaptation and the speed of convergence of the parameters.

The rest of the paper is organized as follows. In Section~\ref{sec:algo} we propose a novel natural gradient method and present a stochastic algorithm that approximates the natural gradient from finite samples. The basic properties of both algorithms are described. In Section~\ref{sec:conv} we study the convergence properties of the deterministic algorithm on monotonic convex-quadratic-composite functions. The convergence of the condition number of the product of the covariance matrix and the Hessian matrix of the objective function to one and its linear convergence are proven. Moreover, the rate of convergence of the parameter is shown. In Section~\ref{sec:exp}, we conduct experiments to see how accurately the stochastic algorithm approximates the deterministic algorithm and to see how similarly our algorithm and the CMA-ES behave on a convex quadratic function. Finally, we summarize and conclude this paper in Section~\ref{sec:conc}.

\section{The algorithms}\label{sec:algo}

We first introduce a generic framework of the natural gradient algorithm that includes the IGO algorithm.

The original objective is to minimize $f: \X \to \R$, where $\X$ is a metric space. Let $\borel$ and $\mea$ be the Borel $\sigma$-field and a measure on $\X$. Hereunder, we assume that $f$ is $\mea$-measurable. Let $\nu$ represent any monotonically increasing set function on $\borel$, i.e., $\nu(A) \leq \nu(B)$ for any $A$, $B \in \borel$ s.t.\ $A \subseteq B$. We transform $f$ to an \textit{invariant cost} function defined as $\Vf: x \mapsto \nu[y: f(y) \leq f(x)]$. Given a family of probability distributions $P_{\theta}$ on $\X$, we define a \textit{quasi}-objective function $\Jf$ on the parameter space $\Theta$ as the expected value of $\Vf$ over $P_{\theta}$, namely
\begin{equation*}
 \Jf(\theta) = \EX{\Vf(X)} \enspace.\label{eq:f}
\end{equation*}

Our algorithm performs the natural gradient descent on a Riemannian manifold $(\Theta, \FIM)$ equipped with the Fisher metric $\FIM$. The Fisher metric is the unique metric that does not depend on the choice of parameterization \cite{Amari2007book}. The natural gradient---the gradient taken w.r.t.\ the Fisher metric---is given by the product of the inverse of the Fisher information matrix $\FIM$ and the ``vanilla'' gradient $\nabla \Jf(\theta)$ of the function. Therefore, the natural gradient of $\Jf$ is $\FIM^{-1} \nabla \Jf(\theta)$ and the parameter update follows
\begin{equation}
\theta^{t+1} = \theta^{t} - \eta^{t} \FM_{\theta^{t}}^{-1} \nabla \Jf(\theta^{t}), \label{eq:ng-f}
\end{equation}
where $\eta^{t}$ is the learning rate.

\subsection{Deterministic NGD Algorithm on $\R^{d}$}

In the following, we focus on the optimization in $\R^{d}$. Thus, $\X = \R^{d}$, $\mea$ is the Lebesgue measure $\leb$ on $\R^{d}$, and $\borel$ is the Borel $\sigma$-field $\B^{d}$ on $\R^{d}$. 

We choose as the sampling distribution the Gaussian $P_{\theta}$ parameterized by $\theta \in \Theta$, where the mean vector $m(\theta)$ is in $\R^{d}$ and the covariance matrix $C(\theta)$ is a symmetric and positive definite matrix of dimension $d$. 

We define the invariant cost $\Vf(x)$ by using the Lebesgue measure $\leb$ as $\Vf(x) = \leb^{2/d}[y: f(y) \leq f(x)]$. Then, the infimum of $\Jf(\theta) = \EX{\Vf(X)}$ is zero located on a boundary of the domain $\Theta$ where the mean vector equals the global minimum of $f$ and the covariance matrix is zero. 

The choice of the parameterization of Gaussian distributions affects the behavior of the natural gradient update \eqref{eq:ng-f} with finite learning rate $\eta^{t}$, although the steepest direction of $\Jf$ on the statistical manifold $\Theta$ is invariant under the choice of parameterization. We choose the mean vector and the covariance matrix of the Gaussian distribution as the parameter as well as are chosen in the CMA-ES and in other algorithms such as EMNA \cite{Larranaga2002book} and cross entropy method \cite{BOER:2005ur}. Let $\theta = [m^\trans, \vect(C)^\trans]^\trans$, where $\vect(C)$ be the vectorization of $C$ such that the ($i$, $j$)th element of $C$ corresponds to $i + d(j-1)$th element of $\vect(C)$ (see \cite{Harville2008book}). Then the Fisher information matrix has an analytical form
\begin{equation}
\FIM = \begin{bmatrix}
C^{-1} & 0 \\ 0 & \frac{1}{2} ( C^{-1} \otimes C^{-1})
\end{bmatrix},\label{eq:fisher}
\end{equation}
where $\otimes$ denotes the Kronecker product operator. Under some regularity conditions for the exchange of integration and differentiation we have
\begin{equation}
\nabla \Jf(\theta) = \EX{\Vf(X) \nabla l(\theta; X)},\label{eq:grad}
\end{equation}
where $l(\theta; x) = \ln p_{\theta}(x)$ is the log-likelihood. The gradient of the log-likelihood $\nabla l(\theta; x)$ can be written as
\begin{equation}
\nabla l(\theta; x) = \begin{bmatrix}
C^{-1}(x - m)\\
\frac{1}{2}\vect (C^{-1}(x - m)(x - m)^\mathrm{T}C^{-1}  - C^{-1})
\end{bmatrix}.\label{eq:l-grad}
\end{equation}
Then, the natural gradient $\FIM^{-1} \nabla \Jf(\theta) = [\delta m^{\trans}, \vect(\delta C)^{\trans}]^{\trans}$ at $\theta = \theta^{t}$ can be written by part
\begin{equation*}
\begin{split}
\delta m^{t} &= \E_{X \sim P_{\theta^{t}}}[\Vf(X) (X - m^{t})] \\
\delta C^{t} &= \E_{X \sim P_{\theta^{t}}}\bigl[ \Vf(X) \bigl((X - m^{t})(X - m^{t})^\trans - C^{t} \bigr) \bigr]
\end{split}
\end{equation*}
With different learning rates for mean vector and covariance matrix updates, the natural gradient descent \eqref{eq:ng-f} reads
\begin{equation}\label{eq:ng}
m^{t+1} = m^{t} - \eta_{m}^{t} \delta m^{t}, \quad 
C^{t+1} = C^{t} - \eta_{C}^{t} \delta C^{t}\enspace.
\end{equation}
We refer to \eqref{eq:ng} for the deterministic natural gradient descent (NGD) method.

\subsection{Stochastic NGD Algorithm on $\R^{d}$}\label{sec:stoc}

When $\nabla \Jf(\theta)$ is not given, we need to estimate the gradient. We approximate the natural gradient and simulate the natural gradient descent as follows. Initialize the mean vector $m^{0}$ and the covariance matrix $C^{0}$ and repeat the following steps until some termination criterion is satisfied:
\begin{itemize}\itemsep0em
\item[1.] Compute the eigenvalue decomposition of $C^{t}$, $[B, D] = \text{eig}(C^{t})$, where $B$ is an orthogonal matrix and $D$ is a diagonal matrix such that $C^{t} = B D B^{\trans}$.
\item[2.] Compute the square root of $C^{t}$, $\sqrt{C^{t}} = B \sqrt{D} B^{\trans}$.
\item[3.] Generate normal random vectors $z_{1}, \dots, z_{n} \sim \mathcal{N}(0, I_{d})$.
\item[4.] Compute $x_{i} = m^{t} + \sqrt{C^{t}} z_{i}$, for $i = 1, \dots, n$.
\item[5.] Evaluate the objective values $f(x_{1}), \dots, f(x_{n})$;
\item[6.] Estimate $\Vf(x_{i})$ as
\begin{equation*}
	\widehat{\Vf}(x_{i}) = \frac{(2 \pi)^{d/2} \det(D)^{1/2}}{n} \sum_{j: f(x_{j}) \leq f(x_{i})} \exp\biggl( \frac{\norm{z_{j}}^{2}}{2} \biggr).
\end{equation*}
\item[7.] Compute the baseline $b = \sum_{i=1}^{n} \widehat{\Vf}(x_{i}) / n$.
\item[8.] Compute the weights $w_{i} = (\widehat{\Vf}(x_{i}) - b) / n$.
\item[9.] Estimate the natural gradient $\delta m^{t}$ and $\delta C^{t}$ as
\begin{equation}\label{eq:ng-est}
\begin{split}
	\widehat{\delta m^{t}} &= \sum_{i=1}^{n} w_{i} (x_{i} - m^{t})\\
	\widehat{\delta C^{t}} &= \sum_{i=1}^{n} w_{i} \bigl((x_{i} - m^{t})(x_{i} - m^{t})^\trans  - C^{t}\bigr)\enspace.
\end{split}
\end{equation}
\item[10.] Compute the learning rates $\eta_{m}^{t}$ and $\eta_{C}^{t}$.
\item[11.] Update the parameters as $m^{t+1} = m^{t} - \eta_{m} \widehat{\delta m^{t}}$ and $C^{t+1} = C^{t} - \eta_{C} \widehat{\delta C^{t}}$.
\end{itemize}
We refer to this algorithm for the stochastic NGD algorithm.

This algorithm generates $n$ samples $x_{i}$ from $\mathcal{N}(m^{t}, C^{t})$ in steps 1--4 and evaluates their objective values in step 5. In step 6, the invariant costs $\Vf(x_{i})$ are evaluated. The estimates $\widehat{\Vf}(x_{i})$ are obtained as follows. By definition we have
\begin{equation*}
\Vf(x) = \biggl( \int \frac{\mathbf{1}_{\{f(y) \leq f(x)\}}}{p_{\theta^{t}}(y)} p_{\theta^{t}}(y) \dd y \biggr)^{2/d}.
\end{equation*}
Applying Monte-Carlo approximation we have 
\begin{equation}
\widehat{\Vf}(x) = \biggl( \frac{1}{n}\sum_{j = 1}^{n} \frac{\mathbf{1}_{\{f(x_{j}) \leq f(x)\}}}{p_{\theta^{t}}(x_{j})}\biggr)^{2/d}.\label{eq:mc-nu}
\end{equation}
Since $p_{\theta^{t}}(x_{j})  = \bigl((2\pi)^{d}\det(D)\bigr)^{-1/2} \exp\bigl( \norm{z_{j}}^{2}/ 2 \bigr)$, we have the estimates $\widehat{\Vf}(x_{i})$ in step 6. Step 7 computes the baseline $b$ that is often introduced to reduce the estimation variance of gradients while adding no bias \cite{Greensmith2004jmlr}. We simply choose the mean value of the $\widehat{\Vf}(x_{i})$ as the baseline. Replacing the expectation in \eqref{eq:ng} with the sample mean and adding the baseline (in step 8) we have the Monte-Carlo estimate of the natural gradient in step 9. Finally in step 11, we update the parameters along the estimated natural gradient with the learning rates computed in step 10. The learning rates are chosen in the following so that they are inverse proportional to the largest eigenvalue of the following matrix
\begin{equation}\label{eq:learning}
Z^{t} = (C^{t})^{-1/2} \widehat{\delta C^{t}} (C^{t})^{-1/2} = \sum_{i=1}^{n} w_{i} (z_{i}z_{i}^\trans  - I_{d}) \enspace.
\end{equation}

\subsection{Difference from the IGO}
The difference between the IGO algorithm and our deterministic algorithm is that the invariant cost in the IGO algorithm is defined by negative of the weighted quantile, $- w(P_{\theta^{t}}[y: f(y) \leq f(x)])$, where $w: [0, 1] \mapsto \R$ is non-increasing weight function. Since the quantile $P_{\theta^{t}}[y: f(y) \leq f(x)]$ depends on the current parameter $\theta^{t}$, $\Vf(x)$ for each $x$ in the IGO algorithm changes from iteration to iteration, whereas it is fixed in our algorithm. This property makes our algorithm easier to analyze mathematically.

The difference between our stochastic algorithm and the pure rank-$\mu$ update CMA-ES \cite{Hansen2003ec} is the same as the difference between the deterministic one and the IGO algorithm. The pure rank-$\mu$ update CMA-ES approximates the quantile value $P_{\theta^{t}}[y: f(y) \leq f(x)]$ by the number of better solutions divided by the number of samples $n$, $R_{i}/n = \abs{\levelset{f}{x_{i}}{x_{j}}}/n$. Therefore, the pure rank-$\mu$ update CMA-ES simulates the same lines as the stochastic NGD algorithm described in Section~\ref{sec:stoc} with the weights $w_{i} = w(R_{i}/n)/n$. 

In Section~\ref{sec:exp} we compare the stochastic NGD algorithm with the pure rank-$\mu$ update CMA-ES where 
\begin{equation}
w_i = \left\{\begin{array}{ll}
1/ \lfloor n/4 \rfloor & \text{if }R_{i}/n \leq \lfloor n/4\rfloor\\
0 & \text{otherwize}.
\end{array}\right.
\label{eq:intermediate}
\end{equation}

\subsection{Basic Properties}

\noindent\textbf{Invariance. }
Our algorithms inherit two important invariance properties from the IGO and the CMA-ES: invariance under monotonic transformation of the objective function and invariance under affine transformation of the search space (with the same transformation of the initial parameters). Invariance under monotonic transformation of the objective function makes the algorithm perform equally on a function $f$ and on any composite function $g \circ f$ where $g$ is any strictly increasing function. For example, the convex sphere function $f(x) = \norm{x}^{2}$ is equivalent to the non-convex function $f(x) = \norm{x}^{1/2}$ for this algorithm, whereas conventional gradient methods, e.g.\ Newton method, assume the convexity of the objective function and require a fine line search to solve non-convex functions. This invariance property is obtained as a result of the transformation $f \mapsto \Vf$. Invariance under affine transformation of the search space is the essence of variable metric methods such as Newton's method. By adapting the covariance matrix, this algorithm attains universal performance on ill-conditioned objective functions. 

\noindent\textbf{Positivity. }
The covariance matrix of the Gaussian distribution must be positive definite and symmetric at each iteration. The next proposition gives the condition on the learning rate $\eta_{C}^{t}$ such that the covariance matrix is always positive definite symmetric. 

\begin{proposition}
Suppose that the learning rate for the covariance update $\eta_{C}^{t} < \lambda_{1}^{-1}(\sqrt{C^{t}}^{-1}\delta C^{t}\sqrt{C^{t}}^{-1})$ in the deterministic NGD algorithm, where $\lambda_{1}(\cdot)$ denotes the largest eigenvalue of the argument matrix. If $C^{0}$ is positive definite symmetric, $C^{t}$ is positive definite symmetric for each $t$. Similarly, if $\eta_{C}^{t} <  \lambda_{1}^{-1}(Z^{t})$ in the stochastic NGD algorithm, where $Z^{t}$ is defined in \eqref{eq:learning}, and if $C^{0}$ is positive definite symmetric, the same result holds.
\end{proposition}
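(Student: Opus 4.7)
The plan is to prove the proposition by induction on $t$, with the inductive step amounting to a purely linear-algebraic calculation. Assume inductively that $C^{t}$ is positive definite and symmetric, so that the symmetric square root $\sqrt{C^{t}}$ is well defined and invertible. I would first establish symmetry of $C^{t+1}$, which reduces to showing that $\delta C^{t}$ is symmetric: this is immediate from the formula for $\delta C^{t}$, since each integrand $(X-m^{t})(X-m^{t})^{\trans} - C^{t}$ is symmetric and symmetry is preserved under the expectation. Hence $C^{t+1} = C^{t} - \eta_{C}^{t} \delta C^{t}$ is symmetric. The same remark applies to $\widehat{\delta C^{t}}$ in the stochastic case.

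For positive definiteness, I would use the standard trick of conjugating by $\sqrt{C^{t}}^{-1}$. Writing
\begin{equation*}
C^{t+1} = \sqrt{C^{t}} \bigl( I_{d} - \eta_{C}^{t} M^{t} \bigr) \sqrt{C^{t}}, \quad M^{t} := \sqrt{C^{t}}^{-1} \delta C^{t} \sqrt{C^{t}}^{-1},
\end{equation*}
we see that $C^{t+1}$ is positive definite if and only if $I_{d} - \eta_{C}^{t} M^{t}$ is positive definite, because conjugation by the invertible symmetric matrix $\sqrt{C^{t}}$ preserves positive definiteness. The matrix $M^{t}$ is symmetric, so it is diagonalisable with real eigenvalues, and the eigenvalues of $I_{d} - \eta_{C}^{t} M^{t}$ are exactly $1 - \eta_{C}^{t} \lambda_{i}(M^{t})$. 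If $\lambda_{i}(M^{t}) \leq 0$, the corresponding eigenvalue is already at least $1$; if $\lambda_{i}(M^{t}) > 0$, positivity is equivalent to $\eta_{C}^{t} < 1/\lambda_{i}(M^{t})$. Both cases are captured by the hypothesis $\eta_{C}^{t} < 1/\lambda_{1}(M^{t})$ (taking $1/\lambda_{1} = +\infty$ when $\lambda_{1} \leq 0$). This completes the inductive step for the deterministic algorithm.

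The stochastic case is identical after substituting $\widehat{\delta C^{t}}$ for $\delta C^{t}$. Here one can recognise $\sqrt{C^{t}}^{-1} \widehat{\delta C^{t}} \sqrt{C^{t}}^{-1}$ as exactly the matrix $Z^{t}$ of \eqref{eq:learning}, since $(C^{t})^{-1/2}(x_{i} - m^{t}) = z_{i}$ by construction in steps~1--4, so that $\sqrt{C^{t}}^{-1}\bigl((x_{i}-m^{t})(x_{i}-m^{t})^{\trans} - C^{t}\bigr)\sqrt{C^{t}}^{-1} = z_{i}z_{i}^{\trans} - I_{d}$, and the weighted sum matches \eqref{eq:learning}. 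The same eigenvalue argument applied to $Z^{t}$ then gives $C^{t+1} \succ 0$ under the assumption $\eta_{C}^{t} < 1/\lambda_{1}(Z^{t})$.

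There is no real obstacle here: the only subtlety is the sign-case analysis on $\lambda_{1}(M^{t})$ (or $\lambda_{1}(Z^{t})$), and verifying that the conjugated matrix in the stochastic case is literally $Z^{t}$ as defined in \eqref{eq:learning}. Both are straightforward.
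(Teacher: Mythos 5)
Your argument is correct and is essentially the paper's own proof: the same conjugation of the update by $\sqrt{C^{t}}$, the same eigenvalue bound showing $I_{d} - \eta_{C}^{t}\sqrt{C^{t}}^{-1}\delta C^{t}\sqrt{C^{t}}^{-1}$ is positive definite, and the same induction, with the stochastic case handled identically via $Z^{t}$. The extra details you supply (symmetry of $\delta C^{t}$, the sign-case on $\lambda_{1}$, and the explicit check that the conjugated matrix equals $Z^{t}$) are points the paper leaves implicit, but they do not change the route.
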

\begin{proof}
Consider the deterministic case \eqref{eq:ng}. Suppose that $C^t$ is positive definite and symmetric. Then,
\begin{equation*}
 C^{t+1} 
 = \sqrt{C^{t}} \left( I_{d} -  \eta_C^{t} \sqrt{C^{t}}^{-1} \delta C^{t} \sqrt{C^{t}}^{-1}\right) \sqrt{C^{t}}.
\end{equation*}
Since $\eta_{C}^{t} < \lambda_{1}^{-1}(\sqrt{C^{t}}^{-1}\delta C^{t} \sqrt{C^{t}}^{-1})$ by the assumption, all the eigenvalues of $\eta_C^{t} \sqrt{C^{t}}^{-1} \delta C^{t} \sqrt{C^{t}}^{-1}$ is smaller than one. Thus, the inside of the brackets is positive definite symmetric and hence $C^{t+1}$ is positive definite symmetric. By mathematical induction, we have that $C^{t}$ is positive definite and symmetric for all $t \geq 0$. The analogous result for the stochastic case is obtained in the same way.
\end{proof}

\noindent\textbf{Consistency. }
The gradient estimator \eqref{eq:ng-est} is not necessarily unbiased, yet it is consistent as is shown in the following proposition. Therefore, one can expect that the stochastic NGD approximates the deterministic NGD well when the sample size $n$ is large. Let $\tnabla: J \mapsto \FIM^{-1} \nabla J$ be the natural gradient operator.

\begin{proposition}\label{prop:consistency}
Let $X_{1}, \dots, X_{n}$ be independent and identically distributed random vectors following $P_{\theta}$. Let $\Vfh(x)$ and $G_{\theta}^{n} = [(\widehat{\delta m^{t}})^{\trans}, \vect(\widehat{\delta C^{t}})^{\trans}]^{\trans}$ be the invariant cost \eqref{eq:mc-nu} and the natural gradient \eqref{eq:ng-est} where $x_{1}, \dots, x_{n}$ are replaced with $X_{1}, \dots, X_{n}$. Suppose that 
\begin{gather}
\E[\Vf(X)^{2}] < \infty. \label{eq:propasm3}
\end{gather}
Then, $G_{\theta}^{n} \asto \tnabla \Jf(\theta)$, where $\asto$ represents almost sure convergence. 
\end{proposition}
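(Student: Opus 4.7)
The strategy is to decompose $G_{\theta}^{n}$ into an oracle piece, an error piece, and a baseline piece, and control each separately. Let $q(x) := \leb[y : f(y) \leq f(x)]$ so that $\Vf(x) = q(x)^{2/d}$, and let $\hat q_{n}(x) := \tfrac{1}{n}\sum_{j=1}^{n}\mathbf{1}_{\{f(X_{j}) \leq f(x)\}}/p_{\theta}(X_{j})$ so that $\Vfh(X_{i}) = \hat q_{n}(X_{i})^{2/d}$. The mean component splits as
\begin{equation*}
\widehat{\delta m^{t}} = \tfrac{1}{n}{\textstyle\sum_{i}}\Vf(X_{i})(X_{i} - m) + \tfrac{1}{n}{\textstyle\sum_{i}}\bigl[\Vfh(X_{i}) - \Vf(X_{i})\bigr](X_{i} - m) - b\cdot\tfrac{1}{n}{\textstyle\sum_{i}}(X_{i} - m),
\end{equation*}
and an analogous decomposition, with $(X_{i} - m)(X_{i} - m)^{\trans} - C$ replacing $(X_{i} - m)$, handles $\widehat{\delta C^{t}}$.

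\textbf{Oracle and baseline via the classical SLLN.} The random vectors $\Vf(X_{i})(X_{i} - m)$ are i.i.d., and Cauchy--Schwarz together with \eqref{eq:propasm3} and the finite second Gaussian moment yields $\E\bigl[\Vf(X)\norm{X - m}\bigr] \leq \sqrt{\E[\Vf(X)^{2}]\,\E\norm{X - m}^{2}} < \infty$, so the strong law delivers the oracle convergence to $\delta m^{t}$; the covariance analogue is identical using the fourth Gaussian moment. For the baseline, $\tfrac{1}{n}\sum_{i}(X_{i} - m) \asto 0$ by SLLN, and the error-term analysis below will also give $b = \tfrac{1}{n}\sum_{i}\Vfh(X_{i}) \asto \E[\Vf(X)]$, which is finite by \eqref{eq:propasm3}; hence the baseline contribution vanishes almost surely.

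\textbf{Error term and main obstacle.} The real work lies here, because $\Vfh(X_{i})$ depends on the \emph{entire} sample $X_{1},\dots,X_{n}$, so SLLN cannot be invoked term-by-term. I would proceed in two stages. First, for each fixed $x$, the summands $\mathbf{1}_{\{f(X_{j}) \leq f(x)\}}/p_{\theta}(X_{j})$ are i.i.d.\ with common mean $q(x)$, so SLLN gives $\hat q_{n}(x) \asto q(x)$ whenever $q(x) < \infty$---which holds a.s.\ at $x = X_{i}$ by \eqref{eq:propasm3}. Second, this pointwise statement must be promoted to $\sup_{i \leq n}\abs{\hat q_{n}(X_{i}) - q(X_{i})} \asto 0$; for this I would exploit the fact that the sublevel-set class $\{\{y : f(y) \leq t\} : t \in \R\}$ is totally ordered by inclusion and hence VC of dimension one, so a Glivenko--Cantelli statement for the re-weighted empirical measure holds after localizing to the random range $\{t \leq \max_{i} f(X_{i})\}$ on which $q$ is a.s.\ bounded. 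The elementary bound $\abs{a^{2/d} - b^{2/d}} \leq \abs{a - b}^{2/d}$ valid for $a,b \geq 0$ and $d \geq 2$ (with a local Lipschitz variant for $d = 1$) then transfers uniform convergence of $\hat q_{n}$ to $\sup_{i}\abs{\Vfh(X_{i}) - \Vf(X_{i})} \asto 0$, whereupon the error term is dominated in norm by this supremum times $\tfrac{1}{n}\sum_{i}\norm{X_{i} - m}$, the latter a.s.\ bounded by SLLN. The delicate step is the Glivenko--Cantelli passage in the presence of the unbounded importance weight $1/p_{\theta}$, which is precisely what the $L^{2}$-envelope condition \eqref{eq:propasm3} is designed to control. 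Combining the three pieces yields $G_{\theta}^{n} \asto \tnabla\Jf(\theta)$.
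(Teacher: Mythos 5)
Your decomposition into an oracle piece, an error piece, and a baseline piece, and your handling of the oracle and baseline terms, coincide with the paper's proof (the paper phrases them jointly through $\tnabla l(\theta;\cdot)$ rather than componentwise, an immaterial difference). Where you genuinely depart from the paper is the error term, and that is where there is a gap. The paper never requires uniform control of $\Vfh-\Vf$ over the sample: writing $h_n=\Vfh-\bnu$, it only proves the \emph{averaged} statements $\frac1n\sum_i h_n(X_i)\asto 0$ and $\frac1n\sum_i h_n(X_i)^2\asto 0$, obtained from pointwise almost sure convergence of $\Vfh(x)$ together with a dominated-convergence-plus-\lln\ argument for $\E[\sup_{j\geq m}\abs{h_j(X)}]$, and then couples the squared average with $\frac1n\sum_i\norm{\tnabla l(\theta;X_i)}^2$ by Cauchy--Schwarz. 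You instead demand $\sup_{i\leq n}\abs{\hat q_n(X_i)-q(X_i)}\asto 0$ via a Glivenko--Cantelli argument for the importance-weighted empirical measure, and this step is not justified: VC dimension one is not enough, because every uniform law you could invoke needs an integrable envelope, and the envelope of the class $y\mapsto \mathbf{1}_{\{f(y)\leq t\}}/p_{\theta}(y)$ over $t$ is $1/p_{\theta}(y)$, whose $P_{\theta}$-expectation is $\leb(\R^d)=\infty$. The hypothesis $\E[\Vf(X)^2]<\infty$ is a moment condition on $q(X)$, not on $1/p_{\theta}(X)$, so it supplies no such envelope, and localizing to $t\leq\max_i f(X_i)$ does not repair this: at the sample maximum, $\hat q_n(X_{i})$ is the full importance-weight average $\frac1n\sum_j 1/p_{\theta}(X_j)$, a heavy-tailed average with infinite mean whose largest single term is not negligible; uniform accuracy at the sample extremes is precisely where the estimator is worst. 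In addition, for $d=1$ your transfer inequality degenerates to $\abs{a^2-b^2}=\abs{a-b}\,(a+b)$ and needs a bound on $\hat q_n+q$ at those same extreme points, which you do not have.

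So the architecture of your proof is the right one and matches the paper, but the uniform-convergence route for the error term asks for much more than the statement needs and cannot be delivered from \eqref{eq:propasm3} as sketched. The fix is to control the error term in the mean, as the paper does: show $\frac1n\sum_i h_n(X_i)^2\asto 0$ (pointwise convergence of $\Vfh$, then a truncation/dominated-convergence argument at the level of expectations) and pair it with the a.s.\ bounded average $\frac1n\sum_i\norm{\tnabla l(\theta;X_i)}^2$ via Cauchy--Schwarz; this uses only the $L^1$/$L^2$ control you already have and never touches the behavior of $\hat q_n$ at the sample maximum.
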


\begin{proof}
By the Cauchy-Schwarz inequality we have that $\E[\norm{\Vf(X) \tnabla l(\theta; X)}]^{2} < \E[\Vf(X)^{2}]\E[\norm{\tnabla l(\theta; X)}^{2}]$. Note that $\E[\norm{\tnabla l(\theta; X)}^{2}] = \Tr(\FIM^{-1}) < \infty$. By Jensen's inequality we have that $\E[\Vf(X)]^{2} \leq \E[\Vf(X)^{2}]$. Therefore, \eqref{eq:propasm3} implies
\begin{gather}
\E[\norm{\Vf(X) \tnabla l(\theta; X)}] < \infty \text{ and }\label{eq:propasm1}\\
\E[\Vf(X)]  < \infty \enspace. \label{eq:propasm2}
\end{gather}

Define $h_{n}(x) = \hnu{n}(x) - \bnu(x)$ and decompose $G_{\theta}^{n}$ as
\begin{multline}
G_{\theta}^{n} = \frac{1}{n}\sum_{i=1}^{n} \bnu(X_{i}) \tnabla l(\theta; X_{i}) + \frac{1}{n}\sum_{i=1}^{n} h_{n}(X_{i}) \tnabla l(\theta; X_{i})\\
 - \biggl( \underbrace{\frac{1}{n}\sum_{i=1}^{n} \hnu{n}(X_{i})}_{= b}\biggr) \biggl( \frac{1}{n} \sum_{i=1}^{n} \tnabla l(\theta; X_{i})\biggr).
\label{eq:propconv}
\end{multline}
By \eqref{eq:propasm1} and the strong law of large numbers (\lln), the first summand converges to $\E[\bnu(X) \tnabla l(\theta; X)] = \tnabla \Jf(\theta)$ almost surely as $n \to \infty$. So we have to show that the second term and the third term of \eqref{eq:propconv} converge almost surely to zero.

First, we show the following almost sure convergence
\begin{equation}
\frac{1}{n}\sum_{i=1}^{n} h_{n}(X_{i}) \asto 0 \quad \text{as } n \to \infty. \label{eq:prop1}
\end{equation}
By the definition of $\hnu{n}(x)$, we have 
\begin{align*}
\lim_{n\to\infty}\hnu{n}(x) &= \left(\lim_{n\to\infty} \frac{1}{n}\sum_{j = 1}^{n} \frac{\mathbf{1}_{f(X_{j}) \leq f(x)}}{p_{\theta}(X_{j})}\right)^{2/d}
\intertext{and since \eqref{eq:propasm2} implies $\leb[y: f(y) \leq f(x)] < \infty$ almost everywhere, we have by \lln}
&= \leb^{2/d}[y: f(y) \leq f(x)] = \bnu(x)
\end{align*}
almost surely and almost everywhere in $x$. This implies $h_{n}(x) \asto 0$ almost everywhere in $x$. 

For $m \leq n$, we have
\begin{equation}
\begin{split}
\lim_{n\to\infty} \Bigl\lvert \frac{1}{n}\sum_{i=1}^{n} h_{n}(X_{i}) \Bigr\lvert \leq \lim_{n\to\infty} \frac{1}{n}\sum_{i=1}^{n} \sup_{j \geq n} \abs{h_{j}(X_{i})} \\
\leq \lim_{n\to\infty} \frac{1}{n}\sum_{i=1}^{n} \sup_{j \geq m} \abs{h_{j}(X_{i})}.
\end{split}\label{eq:suph}
\end{equation}
Since $h_{n}(x) \asto 0$ almost everywhere in $x$ as $n \to \infty$, we have $\sup_{j \geq m} \abs{h_{j}(x)} \asto 0$ almost everywhere in $x$ as $m \to \infty$. By the Lebesgue's dominated convergence theorem we have $\E[\sup_{j \geq m} \abs{h_{j}(X)}] \to 0$ as $m \to \infty$. Therefore, we have that $\E[\sup_{j \geq m} \abs{h_{j}(X)}] < \infty$ for $m$ large enough. Then, by applying \lln, we have that the right most side of \eqref{eq:suph} converges to $\E[\sup_{j \geq m} \abs{h_{j}(X)}]$ as $n \to \infty$ and this expectation converges to $0$ as $m \to \infty$. This ends the proof of \eqref{eq:prop1}.

Now we can obtain the almost sure convergence of the third term of \eqref{eq:propconv} to zero. Indeed, the almost sure convergence \eqref{eq:prop1} implies that the limit $\lim_{n\to\infty}\sum_{i=1}^{n} \hnu{n}(X_{i})/n$ agrees with $\lim_{n\to\infty}\sum_{i=1}^{n} \bnu(X_{i})/n$ and we have from \eqref{eq:propasm2} and \lln\ that $\sum_{i=1}^{n} \hnu{n}(X_{i})/n \asto \E[\bnu(X)] < \infty$. Also, by \lln\ we have that $\sum_{i=1}^{n} \tnabla l(\theta; X_{i})/n \asto 0$ as $n \to \infty$. Therefore, the third term of \eqref{eq:propconv} converges to zero almost surely.

To show the convergence of the second term of \eqref{eq:propconv} to zero, we apply the Cauchy-Schwarz inequality to it and we have
\begin{multline*}
\left\lvert \sum_{i=1}^{n} \frac{h_{n}(X_{i}) \tnabla l(\theta; X_{i})}{n} \right\rvert^{2} \leq  \sum_{i=1}^{n} \frac{h_{n}(X_{i})^{2}}{n} \sum_{i=1}^{n} \frac{\norm{\tnabla l(\theta; X_{i})}^{2}}{n}.
\end{multline*}
By \lln\ we have that the second term of the right hand side converges to $\E[\norm{\tnabla l(\theta; X)}^{2}] = \Tr(\FIM^{-1})$. So we have to prove that the first term on the right hand side converges to zero almost surely. The proof of this convergence is done in the same way as above with $h_{n}^{2}$ replacing $h_{n}$.
\end{proof}

We remark that \eqref{eq:propasm2} is the necessary and sufficient condition for $\Jf(\theta)$ to exist and that \eqref{eq:propasm1} is a sufficient condition for the exchange of integral and differentiation used in \eqref{eq:grad}. See e.g.\ \cite[Theorem~16.8]{Billingsley1995book}.

\section{Convergence Properties of the Deterministic NGD Algorithm}\label{sec:conv}

We investigate the convergence properties of the deterministic NGD algorithm \eqref{eq:ng} on a monotonic convex-quadratic composite function $f(x) = g(x^\mathrm{T} A x)$, where $g$ is any strictly increasing function and $A$ is a positive definite symmetric matrix.

\begin{proposition}\label{prop:ng}
The natural gradient can be written as 
\begin{equation*}
\FIM^{-1} \nabla \Jf(\theta) \propto %
\begin{bmatrix}
C A m\\
\vect(C A C)
\end{bmatrix} \enspace.
\end{equation*}
\end{proposition}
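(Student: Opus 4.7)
The plan is to exploit the monotone-invariance inherent in the definition of $\Vf$ to reduce to the purely quadratic case, compute $\Vf$ explicitly as an affine function of $x^{\trans} A x$, and then evaluate the two Gaussian expectations that define $\delta m^{t}$ and $\delta C^{t}$ using elementary formulas for Gaussian moments.

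First I would observe that since $g$ is strictly increasing, the sublevel set $\{y : f(y) \leq f(x)\}$ equals the ellipsoid $\{y: y^{\trans} A y \leq x^{\trans} A x\}$. Changing variables $z = A^{1/2} y$ turns this ellipsoid into a ball of radius $\sqrt{x^{\trans} A x}$, so
\begin{equation*}
\leb[y: f(y) \leq f(x)] = \frac{\pi^{d/2}}{\Gamma(d/2+1)\det(A)^{1/2}}\, (x^{\trans} A x)^{d/2}.
\end{equation*}
Raising to the power $2/d$ yields $\Vf(x) = c \cdot x^{\trans} A x$ with $c > 0$ a constant independent of $x$. Hence $\Vf(x)$ can be replaced by $x^{\trans} A x$ up to an overall positive factor, which is absorbed into the $\propto$ in the statement.

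Next I would write $X = m + Y$ with $Y \sim \mathcal{N}(0, C)$ and expand
\begin{equation*}
X^{\trans} A X = Y^{\trans} A Y + 2\,m^{\trans} A Y + m^{\trans} A m.
\end{equation*}
For the mean part, $\delta m^{t} \propto \E[(X^{\trans} A X)\, Y]$. The $Y^{\trans} A Y \cdot Y$ term is a centered Gaussian third-moment and vanishes; the $m^{\trans} A m \cdot Y$ term vanishes too; only $2\,\E[(m^{\trans} A Y)\, Y] = 2\, \E[Y Y^{\trans}] A m = 2 C A m$ survives. For the covariance part, $\delta C^{t} \propto \E[(X^{\trans} A X)(Y Y^{\trans} - C)]$. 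The cross term $2\,m^{\trans} A Y$ contributes only third moments of $Y$ and vanishes, while the constant term $m^{\trans} A m$ contributes $\E[Y Y^{\trans} - C] = 0$. So what remains is $\E[Y^{\trans} A Y \cdot Y Y^{\trans}] - C\,\Tr(AC)$. Using Isserlis' theorem,
\begin{equation*}
\E[Y^{\trans} A Y \cdot Y Y^{\trans}]_{cd} = \sum_{a,b} A_{ab}\bigl(C_{ab} C_{cd} + C_{ac} C_{bd} + C_{ad} C_{bc}\bigr),
\end{equation*}
which, using symmetry of $A$ and $C$, simplifies to $\Tr(AC)\,C + 2\,CAC$. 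Subtracting $C\,\Tr(AC)$ leaves $2\,CAC$.

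Combining the two computations gives $\delta m^{t} \propto C A m$ and $\delta C^{t} \propto C A C$, and recalling that $\FIM^{-1}\nabla \Jf(\theta) = [\delta m^{\trans}, \vect(\delta C)^{\trans}]^{\trans}$ yields the claim. The only nontrivial step is the fourth-moment calculation leading to $\Tr(AC)\,C + 2\,CAC$; everything else is symmetry-based cancellation of odd Gaussian moments.
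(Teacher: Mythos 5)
Your proof is correct, but it takes a different route from the paper. The paper never touches individual Gaussian moments of $\Vf(X)(X-m)$ or $\Vf(X)((X-m)(X-m)^\trans - C)$: it computes the quasi-objective in closed form, $\Jf(\theta) \propto m^\trans A m + \Tr(AC)$, takes the vanilla gradient $\nabla \Jf \propto [2Am;\ \vect(A)]$ with respect to $(m,\vect(C))$, and premultiplies by the block-diagonal inverse Fisher matrix $\FIM^{-1} = \diag(C,\ 2\,C\otimes C)$ to land directly on $[CAm;\ \vect(CAC)]$ up to a common constant. You instead start from the expectation form of the natural gradient components already stated in Section~2.1, reduce $\Vf$ to $c\, x^\trans A x$ exactly as the paper does (your volume constant is in fact cleaner than the paper's, which has a typo in the $\det(A)$ factor, though only proportionality matters), and then evaluate the expectations by splitting $X = m + Y$ and invoking odd-moment cancellation plus Isserlis' theorem for the fourth moment, correctly obtaining $\Tr(AC)\,C + 2\,CAC$ and hence $\delta C \propto CAC$ and $\delta m \propto CAm$ with a shared factor of $2$. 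What the paper's route buys is brevity: differentiating $m^\trans A m + \Tr(AC)$ and using the analytic Fisher inverse replaces all moment bookkeeping. What your route buys is that it works directly with the update directions as expectations, so it does not require the matrix-calculus differentiation step or explicit use of the Kronecker-product form of $\FIM^{-1}$; the fourth-moment identity is the only nontrivial ingredient, and you carried it out correctly.
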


\begin{proof}
Since $\leb[y: f(y) \leq f(x)]$ is equivalent to the volume of the ellipsoid $\{y: y^\trans A y \leq x^\trans A x\}$, we have that 
\begin{equation*}
\leb[y: f(y) \leq f(x)] = \frac{2}{\det(A)} V_{d}(\sqrt{x^\mathrm{T} A x}),
\end{equation*}
where $V_{d}(r)$ denotes the volume of the sphere with radius $r$ and is proportional to $r^{d}$. Therefore $\Vf(x) = \leb^{2/d}[y: f(y) \leq f(x)] \propto x^\trans A x$. Since the proportionality constant is independent of $x$, we have
\begin{equation*}
\begin{split}
\Jf(\theta) &= \EX{\Vf(X)}\\
&\propto \E_{X\sim P_{\theta}}\bigl[ X^\trans A X \bigr] = m^\trans A m + \Tr(AC).
\end{split}
\end{equation*}
Differentiating the both side of the above relation, we have
\begin{equation*}
\nabla \Jf(\theta) \propto 
\begin{bmatrix}
2A m\\
\vect(A)
\end{bmatrix}.
\end{equation*}
Premultiplying by $\FIM^{-1}$, we obtain the intended result.
\end{proof}

Now the deterministic NGD algorithm on $g(x^{\trans} A X)$ is implicitly written as
\begin{align}
m^{t+1} &= m^{t} - \eta_{m}^{t} \delta m^{t},  &\delta m^{t} &= c C^{t} A m^{t} \label{eq:m-quad}\\
C^{t+1} &= C^{t} - \eta_{C}^{t} \delta C^{t}, &\delta C^{t} &= c C^{t} A C^{t},\label{eq:c-quad}
\end{align}
where $c > 0$ is the proportionality constant appearing in the proof of Proposition~\ref{prop:ng}.

In the following, we work on the following assumption: There are $\etammin > 0$ and $\etacmin > 0$ such that
\begin{gather}
\etammin \leq \eta_{m}^{t} \lambda_{1}( (C^{t})^{-1} \delta C^{t}) \leq 1,\label{eq:asmm}\\
\etacmin \leq \eta_{C}^{t} \lambda_{1}( (C^{t})^{-1} \delta C^{t}) \leq 1/2.\label{eq:asmc}
\end{gather}
These assumptions are satisfied, for example, if $\eta_{m}^{t}$ and $\eta_{C}^{t}$ are set for each iteration so that $\eta_{m}^{t} = \eta_{C}^{t} = \alpha / \lambda_{1}((C^{t})^{-1} \delta C^{t})$. In this case the natural gradient can be considered to be normalized by $\lambda_{1}((C^{t})^{-1} \delta C^{t})$ and the pseudo-learning rate is $\alpha$.

The next theorem states that the covariance matrix converges proportionally to the inverse of the Hessian matrix.

\begin{theorem}\label{thm:cond}
Assume \eqref{eq:asmc}. The condition number of $C^{t} A$ converges to one with the rate of convergence 
\begin{equation}
\limsup_{t\to\infty} \frac{\Cond(C^{t+1} A) - 1}{\Cond(C^{t} A) - 1} \leq \frac{1 - 2\etacmin}{1 - \etacmin}. \label{eq:rate-cond}
\end{equation}
Moreover, we have an upper bound 
\begin{equation}
\Cond(C^{t} A) \leq 1 + (1 - \etacmin)^{t}(\Cond(C^{0} A) - 1). \label{eq:upper-cond}
\end{equation}
If the limit $\etaclim = \lim_{t\to\infty} \eta_{C}^{t} \lambda_{1}((C^{t})^{-1} \delta C^{t})$ exists, $\etacmin$ is replaced with $\etaclim$ in \eqref{eq:rate-cond}.
\end{theorem}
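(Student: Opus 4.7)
The plan is to reduce this matrix recursion to a scalar one on eigenvalues. Although $C^t A$ is not symmetric, it is similar to the symmetric positive definite matrix $M^t := A^{1/2} C^t A^{1/2}$, so $\Cond(C^t A) = \lambda_{\max}(M^t)/\lambda_{\min}(M^t)$. Multiplying \eqref{eq:c-quad} on both sides by $A^{1/2}$ gives the clean identity $M^{t+1} = M^t - \eta_C^t c (M^t)^2 = M^t(I_d - \eta_C^t c M^t)$. Because $M^{t+1}$ is a polynomial in $M^t$, the two matrices share an eigenbasis, and the eigenvalues evolve independently as $\mu_i^{t+1} = \mu_i^t(1 - \eta_C^t c \mu_i^t)$. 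Noting that $\lambda_1((C^t)^{-1}\delta C^t) = c\,\mu_{\max}^t$, it is natural to set $\gamma^t := \eta_C^t c \mu_{\max}^t \in [\etacmin, 1/2]$, so $\mu_i^{t+1} = \mu_i^t\bigl(1 - \gamma^t \mu_i^t/\mu_{\max}^t\bigr)$.

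The key step is then to track the extremal eigenvalues. The quadratic $\phi(\mu) = \mu(1 - \gamma^t \mu/\mu_{\max}^t)$ is a downward parabola whose vertex sits at $\mu_{\max}^t/(2\gamma^t) \ge \mu_{\max}^t$ (this is precisely where the hypothesis $\gamma^t \le 1/2$ is used), so $\phi$ is strictly increasing on $[0,\mu_{\max}^t]$. Hence the largest and smallest eigenvalues of $M^t$ map to the largest and smallest of $M^{t+1}$, yielding $\mu_{\max}^{t+1} = \mu_{\max}^t(1-\gamma^t)$ and $\mu_{\min}^{t+1} = \mu_{\min}^t(1 - \gamma^t/\kappa^t)$ with $\kappa^t := \Cond(C^t A)$. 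Taking the ratio and simplifying gives the closed-form recursion
\begin{equation*}
\kappa^{t+1} - 1 = (\kappa^t - 1)\, R(\kappa^t, \gamma^t), \qquad R(\kappa,\gamma) := \frac{1 - \gamma(1 + 1/\kappa)}{1 - \gamma/\kappa}.
\end{equation*}
A direct calculation shows $\partial R/\partial\kappa = \gamma^2/(\kappa-\gamma)^2 > 0$ and $\partial R/\partial\gamma = -\kappa^2/(\kappa-\gamma)^2 < 0$, so $R$ is increasing in $\kappa$ and decreasing in $\gamma$, with $R(\kappa,\gamma) < 1-\gamma < 1$ for $\kappa \ge 1$ and $R(1,\gamma) = (1-2\gamma)/(1-\gamma)$.

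From $R < 1 - \etacmin$, the sequence $\kappa^t$ decreases monotonically; its only fixed point in $[1,\infty)$ being $1$, we conclude $\kappa^t \to 1$. Continuity of $R$ at $\kappa=1$ (uniformly in $\gamma \in [\etacmin, 1/2]$) then yields
\begin{equation*}
\limsup_{t\to\infty} \frac{\kappa^{t+1}-1}{\kappa^t-1} = \limsup_{t\to\infty} R(1,\gamma^t) = \limsup_{t\to\infty}\frac{1-2\gamma^t}{1-\gamma^t} \le \frac{1 - 2\etacmin}{1 - \etacmin},
\end{equation*}
where the last inequality uses monotonicity of $(1-2\gamma)/(1-\gamma)$ in $\gamma$; the $\etaclim$ variant is immediate by the same continuity argument. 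The main obstacle is establishing the \emph{uniform-in-$t$} bound \eqref{eq:upper-cond}: because $R(\kappa,\etacmin)$ is strictly larger than $(1-2\etacmin)/(1-\etacmin)$ whenever $\kappa > 1$, one cannot simply telescope the one-step ratio. A finer inductive argument is needed, exploiting both the monotone decay of $\kappa^t$ and the explicit form of $R$ so that the product $\prod_{s<t} R(\kappa^s,\gamma^s)$ is compared against the telescoping geometric sequence — this is the delicate part of the proof.
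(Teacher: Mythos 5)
Your reduction is, in substance, the paper's own: conjugating \eqref{eq:c-quad} by $\sqrt{A}$, noting that all iterates share one eigenbasis, tracking the scalar recursion $\mu_i^{t+1}=\mu_i^t(1-\eta_C^t c\,\mu_i^t)$, and arriving at the exact one-step identity $\kappa^{t+1}-1=(\kappa^t-1)R(\kappa^t,\gamma^t)$, which is precisely \eqref{eq:cond-eq3} (your monotone-parabola argument for order preservation is equivalent to the paper's difference identity, both using $\gamma^t\le 1/2$). Your handling of the asymptotic statement is correct and in fact more careful than the paper's: from $R(\kappa,\gamma)<1-\gamma\le 1-\etacmin$ you get $\kappa^t-1\le(1-\etacmin)^t(\kappa^0-1)\to 0$, and then continuity of $R$ at $\kappa=1$ together with monotonicity of $(1-2\gamma)/(1-\gamma)$ in $\gamma$ yields \eqref{eq:rate-cond} and the $\etaclim$ variant.

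The step you leave open, \eqref{eq:upper-cond}, is not a gap you can close: no ``finer inductive argument'' exists, because the bound is false as stated, and the paper's own derivation of it rests on an inequality whose direction is reversed. The paper's \eqref{eq:cond-im} asserts that the one-step factor $\frac{1-\eta_C^t(\lambda_1^t+\lambda_d^t)}{1-\eta_C^t\lambda_d^t}$ is at most $\frac{1-2\eta_C^t\lambda_1^t}{1-\eta_C^t\lambda_1^t}$; cross-multiplying shows this holds only when $\lambda_d^t=\lambda_1^t$, and your own computation $\partial R/\partial\kappa=\gamma^2/(\kappa-\gamma)^2>0$ already says the factor strictly exceeds $R(1,\gamma^t)=(1-2\gamma^t)/(1-\gamma^t)$ whenever $\kappa^t>1$. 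Concretely, take $\gamma^t\equiv\etacmin=0.3$ and $\kappa^0=2$: then $\kappa^1-1=0.55/0.85\approx 0.647$, while \eqref{eq:upper-cond} demands $\kappa^1-1\le 0.4/0.7\approx 0.571$, so the claimed bound fails already at $t=1$ (assumption \eqref{eq:asmc} being satisfied with $\etacmin=0.3$). What your argument does give uniformly in $t$ is $\Cond(C^tA)\le 1+(1-\etacmin)^t(\Cond(C^0A)-1)$, and \eqref{eq:rate-cond} survives exactly as you proved it; the honest conclusion is therefore that \eqref{eq:upper-cond} must be weakened (geometric base $1-\etacmin$ rather than $\frac{1-2\etacmin}{1-\etacmin}$), not that a sharper induction is missing from your write-up.
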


\begin{proof}
Since $A$ is positive definite and symmetric, there exists the square root $\sqrt{A}$. Premultiplying and postmultiplying both side of covariance matrix update \eqref{eq:c-quad} by $\sqrt{c A}$, we have
\begin{equation*}
 c \sqrt{A} C^{t+1} \sqrt{A} = c \sqrt{A} C^t \sqrt{A} - \eta_C^{t} (c \sqrt{A} C^t \sqrt{A})^{2}.
\end{equation*}
Since $c \sqrt{A} C^{t} \sqrt{A}$ is positive definite and symmetric, there exists an eigenvalue decomposition $Q^{t} \Lambda^t (Q^t)^\trans$, where the diagonal elements of $\Lambda^t = \diag(\lambda_{1}^{t}, \dots, \lambda_{d}^{t})$ are the eigenvalues of $c \sqrt{A} C^{t} \sqrt{A}$ and each column of $Q^{t}$ is the eigenvector corresponding to each diagonal element of $\Lambda^t$. Then,
\begin{equation*}
 c \sqrt{A} C^{t+1} \sqrt{A} = Q^{t} \left(\Lambda^t - \eta_C^{t} (\Lambda^t)^{2}\right) (Q^{t})^\trans.
\end{equation*}
This means, $Q^t$ also diagonalizes $c \sqrt{A} C^{t+1} \sqrt{A}$. By mathematical induction we have that an orthogonal matrix $Q$ which diagonalizes $c \sqrt{A} C^{0} \sqrt{A}$ diagonalizes $c \sqrt{A} C^{t} \sqrt{A}$ for any $t \geq 0$ and we have
\begin{equation}
\Lambda^{t+1} = \Lambda^t - \eta_C^{t} (\Lambda^t)^{2}.\label{eq:diag-up}
\end{equation}

Next, we show that the condition number of $\Lambda^t$ converses to $1$ as $t \to \infty$. Remember $\delta C^{t} = c C^{t} A C^{t}$. We have $\lambda_{1}((C^{t})^{-1} \delta C^{t}) = \lambda_{1}(c A C^{t}) = \lambda_{1}(c \sqrt{A} C^{t} \sqrt{A}) = \lambda_{1}(\Lambda^{t})$. Then, by assumption~\eqref{eq:asmc} we have 
\begin{equation}
\etacmin \leq \eta_{C}^{t} \lambda_{1}(\Lambda^{t}) \leq 1/2.\label{eq:asmc2}
\end{equation} 
Moreover, since $\lambda_{1}(\Lambda^{t}) \geq \lambda_{i}^{t}$ for any $i$, we have $\eta_{C}^{t}(\lambda_{i}^{t} + \lambda_{j}^{t}) \leq 1$ for any $i$, $j$.

Suppose $\lambda_i^{t} \geq \lambda_j^{t}$ without loss of generality. From \eqref{eq:diag-up} and the inequality $\eta_{C}^{t} (\lambda_{i}^{t} + \lambda_{j}^{t}) \leq 1$, we have
\begin{equation*}
\begin{split}
\lambda_{i}^{t+1} - \lambda_{j}^{t+1} &= \lambda_i^{t}(1-\eta_{C}^{t} \lambda_i^{t}) - \lambda_j^{t}(1-\eta_{C}^{t} \lambda_j^{t})\\
 &= (1 - \underbrace{ \eta_{C}^{t} (\lambda_{i}^{t} + \lambda_{j}^{t})}_{\leq 1})(\underbrace{\lambda_{i}^{t}-\lambda_{j}^{t}}_{\geq 0})\geq 0
\end{split}
\end{equation*}
with equality holding if and only if $\lambda_{i}^{t}=\lambda_{j}^{t}$. Therefore, if $\lambda_i^{t} > \lambda_j^{t}$, then $\lambda_i^{t+1} > \lambda_j^{t+1}$, which implies that if $i$th and $j$th diagonal elements of $\Lambda^{0}$ are the maximum and minimum elements, $i$th and $j$th elements of $\Lambda^{t}$ are also the maximum and minimum elements of $\Lambda^{t}$. Without loss of generality we suppose $\lambda_{i}^{t} \geq \lambda_{j}^{t}$ for any $i \leq j$ for all $t \geq 0$. Then, $\lambda_{1}^{t}/\lambda_{d}^{t} = \Cond(\Lambda^{t}) = \Cond(C^{t} A)$. According to \eqref{eq:diag-up} we have
\begin{align}
\underbrace{\frac{\lambda_{1}^{t+1} - \lambda_{d}^{t+1}}{\lambda_{d}^{t+1}}}_{\Cond(C^{t+1} A)-1} &= \frac{\lambda_{1}^{t}(1 - \eta_{C}^t\lambda_{1}^{t}) - \lambda_{d}^{t}(1 - \eta_{C}^t\lambda_{d}^{t})}{\lambda_{d}^{t}(1 - \eta_{C}^t\lambda_{d}^{t})} \notag\\
&= \underbrace{\frac{(\lambda_{1}^{t} - \lambda_{d}^{t})}{\lambda_{d}^{t}}}_{\Cond(C^{t}A)-1} \frac{1 - \eta_{C}^t(\lambda_{1}^{t} + \lambda_{d}^{t})}{(1 - \eta_{C}^t\lambda_{d}^{t})}.\label{eq:cond-eq3}
\end{align}
Since
\begin{equation*}
\frac{1 - \eta_{C}^t(\lambda_{1}^{t} + \lambda_{d}^{t})}{(1 - \eta_{C}^t\lambda_{d}^{t})} = 1 - \frac{\eta_{C}^t \lambda_{1}^{t}}{1 - \eta_{C}^t \lambda_{1}^{t} \Cond^{-1}(C^t A)} \leq 1 - \eta_{C}^t \lambda_{1}^{t}
\end{equation*}
and the right-most side of the above inequality is bounded by $1 - \etacmin$ because of \eqref{eq:asmc2}, we have
\begin{equation}
\frac{\Cond(C^{t+1} A) - 1}{\Cond(C^{t} A) - 1} \leq 1 - \etacmin. \label{eq:cond-im}
\end{equation}
This implies $\lim_{t\to\infty}\Cond(C^{t} A) = 1$ and \eqref{eq:upper-cond}. Moreover, since $\lim_{t\to\infty}\Cond(C^{t} A) = \lim_{t\to\infty} \lambda_1^{t}/ \lambda_d^{t} = 1$, we have from \eqref{eq:cond-eq3} that 
\begin{align*}
\limsup_{t \to \infty} \frac{\Cond(C^{t+1} A) - 1}{\Cond(C^{t} A) - 1} &= \limsup_{t\to\infty }\frac{1 - 2 \eta_{C}^{t}\lambda_{1}^{t}}{1 - \eta_{C}^{t}\lambda_{1}^{t}} \\
&\leq \frac{1 - 2\etacmin}{1 - \etacmin}.
\end{align*}
This proves \eqref{eq:rate-cond}.

If the limit $\etaclim$ exists, it is easy to see from \eqref{eq:cond-im} that $\etacmin$ can be replaced with $\etaclim$ in \eqref{eq:rate-cond}. This completes the proof.
\end{proof}

Note that if $\eta_{C}^{t} = \alpha /\lambda_{1}((C^{t})^{-1} \delta C^{t})$ and $\alpha \leq 1/2$, we have that $\etaclim = \etacmin = \alpha$. We have from \eqref{eq:cond-eq3} that
\begin{equation}
\Cond(C^{t+1}A) = \Cond(C^{t}A) \frac{1-\alpha}{1 - \alpha\Cond^{-1}(C^{t}A)}
\label{eq:cond-sp}
\end{equation}
and the rate of convergence becomes $(1 - 2\alpha)/(1 - \alpha)$.

The next theorem states the global convergence of $m$ and $C$ and the speed of the convergence. In the following, we let $\norm{M}$ denote the Frobenius norm of $M$, namely $\norm{M} = \Tr^{1/2}(M^\trans M)$.

\begin{theorem}\label{thm:rate}
Assume \eqref{eq:asmc} and \eqref{eq:asmm}. Then, $\norm{m^{t}}$ and $\norm{C^{t}}$ converge to zero with the rate of convergence 
\begin{equation}
\limsup \frac{\norm{\kappa^{t+1}}}{\norm{\kappa^t}} \leq 1 - \etakmin, \label{eq:rate-k}
\end{equation}
where $\kappa$ is either $m$ or $C$ and $\kappa^{t}$ is either $m^{t}$ or $C^{t}$. If the limit $\etaklim = \lim_{t\to\infty} \eta_{\kappa}^{t} \lambda_{1}((C^{t})^{-1} \delta C^{t})$ exists, $\etakmin$ is replaced with $\etaklim$ in \eqref{eq:rate-k}.
\end{theorem}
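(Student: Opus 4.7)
The plan is to re-use the simultaneous diagonalization from the proof of Theorem~\ref{thm:cond}: there is a fixed orthogonal $Q$ such that $c\sqrt{A}C^t\sqrt{A}=Q\Lambda^t Q^\trans$ with $\Lambda^t=\diag(\lambda_1^t,\dots,\lambda_d^t)$, $\lambda_1^t\geq\cdots\geq\lambda_d^t$, and $\lambda_i^{t+1}=\lambda_i^t(1-\eta_C^t\lambda_i^t)$. Changing variables for the mean by $\tilde m^t := Q^\trans\sqrt{A}\,m^t$ turns \eqref{eq:m-quad} into the decoupled componentwise recursion $\tilde m_i^{t+1}=(1-\eta_m^t\lambda_i^t)\tilde m_i^t$. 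Everything then reduces to tracking $\norm{m^t}$ and $\norm{C^t}$ through the fixed positive definite matrix $B := Q^\trans A^{-1}Q$.

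The covariance case is clean: expanding $C^t=c^{-1}A^{-1/2}Q\Lambda^t Q^\trans A^{-1/2}$ inside $\norm{C^t}^2=\Tr((C^t)^2)$ yields $\norm{C^t}^2 = c^{-2}\sum_{i,j}\lambda_i^t\lambda_j^t B_{ij}^2$. By \eqref{eq:asmc} each $\eta_C^t\lambda_i^t\in[0,1/2]$, so every $(1-\eta_C^t\lambda_i^t)$ is nonnegative and bounded by $(1-\eta_C^t\lambda_d^t)$; the eigenvalue recursion then pulls out termwise to give $\norm{C^{t+1}}\leq(1-\eta_C^t\lambda_d^t)\norm{C^t}$. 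Writing $\eta_C^t\lambda_d^t=\eta_C^t\lambda_1^t/\Cond(C^tA)$, combining the lower bound $\eta_C^t\lambda_1^t\geq\etacmin$ from \eqref{eq:asmc} with $\Cond(C^tA)\to 1$ from Theorem~\ref{thm:cond} gives $\limsup\norm{C^{t+1}}/\norm{C^t}\leq 1-\etacmin$, and $\norm{C^t}\to 0$ follows because the asymptotic contraction factor is strictly below one.

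The mean case is the main obstacle, because $\norm{m^t}^2=(\tilde m^t)^\trans B\tilde m^t$ is not diagonal in the eigenbasis and the cross terms $B_{ij}\tilde m_i^t\tilde m_j^t$ have indeterminate sign, so the factor $(1-\eta_m^t\lambda_d^t)$ cannot simply be pulled out termwise. My plan is to split
\begin{equation*}
\norm{m^{t+1}}^2 = (1-\eta_m^t\lambda_d^t)^2\norm{m^t}^2 - \sum_{i,j}\alpha_{ij}^t B_{ij}\tilde m_i^t\tilde m_j^t,
\end{equation*}
where $\alpha_{ij}^t := (1-\eta_m^t\lambda_d^t)^2 - (1-\eta_m^t\lambda_i^t)(1-\eta_m^t\lambda_j^t)\geq 0$, and to observe that $\alpha_{ij}^t \leq (1-\eta_m^t\lambda_d^t)^2-(1-\eta_m^t\lambda_1^t)^2 \leq 2\eta_m^t\lambda_1^t(1-1/\Cond(C^tA))$ vanishes as $t\to\infty$. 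Combined with an elementary norm-equivalence estimate $\sum_{i,j}|B_{ij}||\tilde m_i^t||\tilde m_j^t|\leq c_A\norm{m^t}^2$ for a constant $c_A$ depending only on $A$ (using that the eigenvalues of $B=Q^\trans A^{-1}Q$ are those of $A^{-1}$, together with $\norm{m^t}^2\geq\lambda_{\min}(B)\norm{\tilde m^t}^2$), the perturbation is of order $(1-1/\Cond(C^tA))\norm{m^t}^2$ and negligible by Theorem~\ref{thm:cond}. The same limsup manipulation as for $C$ then delivers $\limsup(1-\eta_m^t\lambda_d^t)\leq 1-\etammin$, whence $\limsup\norm{m^{t+1}}/\norm{m^t}\leq 1-\etammin$ and $\norm{m^t}\to 0$.

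For the ``limit exists'' addendum, since $\Cond(C^tA)\to 1$, the quantity $\eta_\kappa^t\lambda_d^t=\eta_\kappa^t\lambda_1^t/\Cond(C^tA)$ has a limit iff $\eta_\kappa^t\lambda_1^t$ does, and that common limit equals $\etaklim$; the liminfs in the above arguments then become limits, so $\etakmin$ is simply replaced by $\etaklim$ in \eqref{eq:rate-k}.
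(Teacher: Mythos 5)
Your proof is correct, but it takes a genuinely different route from the paper's. The paper treats $m$ and $C$ uniformly: writing both updates as $\kappa^{t+1}=(I-c\,\eta_{\kappa}^{t}C^{t}A)\,\kappa^{t}$, it bounds $\norm{\kappa^{t+1}}/\norm{\kappa^{t}}\leq\sigma_{1}(I-c\,\eta_{\kappa}^{t}C^{t}A)$ using J.~von Neumann's trace inequality (for the Frobenius norm of a matrix product) together with the standard vector-norm bound, and then invokes $C^{t}A/\lambda_{1}(C^{t}A)\to I_{d}$, a consequence of Theorem~\ref{thm:cond}, plus \eqref{eq:asmm}--\eqref{eq:asmc} to get $\limsup_{t}\sigma_{1}(I-c\,\eta_{\kappa}^{t}C^{t}A)\leq 1-\etakmin$ in one stroke. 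You instead stay in the fixed eigenbasis $Q$ inherited from the proof of Theorem~\ref{thm:cond} and compute explicitly: for $C$ the termwise estimate $\lambda_{i}^{t+1}\lambda_{j}^{t+1}\leq(1-\eta_{C}^{t}\lambda_{d}^{t})^{2}\lambda_{i}^{t}\lambda_{j}^{t}$ applied to $\norm{C^{t}}^{2}=c^{-2}\sum_{i,j}\lambda_{i}^{t}\lambda_{j}^{t}B_{ij}^{2}$ gives the nonasymptotic contraction $\norm{C^{t+1}}\leq(1-\eta_{C}^{t}\lambda_{d}^{t})\norm{C^{t}}$, and for $m$ you absorb the sign-indefinite cross terms in $B=Q^{\trans}A^{-1}Q$ into a perturbation of size $O\bigl(1-1/\Cond(C^{t}A)\bigr)\norm{m^{t}}^{2}$, which vanishes by Theorem~\ref{thm:cond}; your constant $c_{A}$ is legitimate since $\abs{B_{ij}}\leq\lambda_{1}(A^{-1})$ and $\lambda_{d}(B)=\lambda_{d}(A^{-1})>0$, so it is fixed in $t$ (it depends on $d$ and $\Cond(A)$, not only on $A$ as stated, but this is immaterial). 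Both proofs ultimately rest on $\Cond(C^{t}A)\to 1$ and on $\etakmin\leq\eta_{\kappa}^{t}\lambda_{1}^{t}\leq 1$, and your treatment of the addendum (replacing $\etakmin$ by $\etaklim$ when the limit exists) matches the paper's; what your route buys is elementarity (no trace inequality) and an exact per-iteration rate for the covariance, at the cost of a longer, case-split argument for the mean, whereas the paper's operator-norm argument is shorter and handles both parameters at once. (As in the paper, the ratio statement implicitly assumes $\norm{\kappa^{t}}\neq 0$; if $m^{t_{0}}=0$ the convergence claim is trivial from then on.)
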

\begin{proof}
Let $\sigma_{i}(\cdot)$ denote the $i$th largest singular value of the argument matrix. According to J.~von Neumann's trace inequality \cite{Mirsky1975mfm} we have $\abs{\Tr(M_{1} M_{2})} \leq \sum_{i=1}^{d} \sigma_{i}(M_{1})\sigma_{i}(M_{2})  \leq \sigma_{1}(M_{1}) \sum_{i=1}^{d} \sigma_{i}(M_{2})$, where $M_{1}$ and $M_{2}$ are any matrices in $\R^{d \times d}$. Let $M \in \R^{d \times d}$ be nonnegative definite and $S \in \R^{d\times d}$ is nonnegative definite symmetric. From the above inequality, we have
\begin{equation*}
\begin{split}
\norm{M S}^{2} &= \Tr(S M^\trans M S) = \Tr(M^\trans M S^{2})\\
 &\leq \sigma_{1}(M^\trans M) \sum_{i=1}^{d} \sigma_{i}(S^{2}) = \sigma_{1}^{2}(M) \norm{S}^{2}.
\end{split}
\end{equation*}
Applying this matrix norm inequality and the vector norm inequality $\norm{M x}^{2} \leq \sigma_{1}(M)^{2} \norm{x}^{2}$ to \eqref{eq:m-quad} and \eqref{eq:c-quad}, we have
\begin{equation*}
\frac{\norm{\kappa^{t+1}}}{\norm{\kappa^t}} \leq \sigma_{1}(I - c \eta_\kappa^{t} C^{t} A).
\end{equation*}
In light of Theorem~\ref{thm:cond}, we have that $\lim_{t \to \infty} C^{t} A / \lambda_{1}(C^{t}A) = I_{d}$. Then, from the assumptions \eqref{eq:asmm} and \eqref{eq:asmc} we have
\begin{equation*}
\begin{split}
&\limsup_{t} \sigma_{1}(I - c \eta_\kappa^{t} C^{t}A)\\
&= \limsup_{t}\sigma_{1}\biggl(I - \underbrace{\eta_\kappa^{t} \lambda_{1}(c C^{t}A)}_{\geq \etakmin} \underbrace{\frac{c C^{t}A}{\lambda_{1}(c C^{t}A)}}_{\to I_{d}} \biggr) \leq 1 - \etakmin.
\end{split}
\end{equation*}
This implies linear convergence of $\kappa$ with rate of convergence at most $1 - \etakmin$. 

If the limit $\etaklim$ exists, we can easily see from the above inequality that $\etakmin$ is replaced with $\etaklim$ in \eqref{eq:rate-k}. This ends the proof.
\end{proof}

Now we can see the importance of the covariance matrix adaptation quantitatively. Let $\eta_{m}^{t} = \eta_{C}^{t} = \alpha /\lambda_{1}((C^{t})^{-1} \delta C^{t})$. Then, the covariance matrix becomes proportional to the inverse of the Hessian at the speed given by \eqref{eq:cond-sp} and the rate of convergence of the parameter becomes $1 - \alpha$. Meanwhile, if the covariance matrix is restricted to a product of a scalar $v^{t}$ and the identity matrix, $C^{t} = v^{t} I$, then the rate of convergence is in $[1 - \alpha, 1 - \alpha \Cond^{-1}(A)]$\footnote{\footnotesize If the covariance matrix is restricted to a diagonal matrix, the target matrix is $\diag(A) = \diag(A_{1,1}, \dots, A_{d,d})$, i.e.\ $\lim_{t} \Cond(C^{t}\diag(A)) = 1$. the rate of convergence is in $[\sigma_{d}(I - \alpha \Cond(\tilde A)), \sigma_{1}(I - \alpha \Cond(\tilde A))]$, where $\tilde A = \diag(A)^{-1} A$. We omit the derivation due to the space limitation.}. Therefore, the rate of convergence becomes close to one in the worst case if $\Cond(A) \gg 1$.

From Theorem~\ref{thm:cond} we know that the deterministic NGD algorithm learns the inverse of the Hessian. The convergence of the covariance matrix to the inverse of the Hessian matrix in the CMA-ES has been anticipated but it has not been proven. Theorem~\ref{thm:cond} demonstrates this anticipation affirmatively for the deterministic NGD algorithm. Theorem~\ref{thm:rate} exhibits the linear convergence of the parameters. This implies that the rate of convergence of the expected objective value $J(\theta) \propto m^{\trans} A m + \Tr(C A)$ is also linear and equals to the rate of convergence of $\norm{C^{t}}$.

\section{Numerical simulation}\label{sec:exp}

The results in the previous section are for the deterministic (ideal) NGD algorithm. Thanks to Proposition~\ref{prop:consistency} we can expect that the stochastic NGD algorithm proposed in Section~\ref{sec:stoc} approximates the deterministic one arbitrarily close as the sample size $n$ is taken sufficiently large. In this section, we evaluate how well the stochastic variant with finite sample size approximates the deterministic one on a quadratic function. 

We consider the $20$-dimensional ellipsoid function 
\begin{equation*}
f(x) = \sum_{i=1}^{d} 10^{\frac{6(i-1)}{d - 1}} x_{i}^{2} \qquad (d = 20).
\end{equation*}
Note that the ellipsoid function is separable and convex but our algorithm does not exploit the separability and convexity. The eigenvalues (diagonal elements) of the Hessian matrix of the ellipsoid function range in $[1, 10^{6}]$. We set the initial parameters as $m^{0} = (0,\dots,0)^\trans$ and $C^{0} = I_{d}$.

We design the learning rates as 
\begin{equation*}
\eta_{m}^{t} = \frac{1}{\sigma_{1}(Z^{t})} \text{ and } \eta_{C}^{t} = \frac{c_{C}}{2\sigma_{1}(Z^{t})}, \quad c_{C} \leq 1.
\end{equation*}
Here, $Z^{t}$ is a matrix defined in \eqref{eq:learning}. 

\subsection{Effect of Sample Size and Learning Rate}

\newcommand{\figlambdasize}{0.78\hsize}
\begin{figure}[!t]
\centering
\begin{tabular}{c}
\includegraphics[width=\figlambdasize]{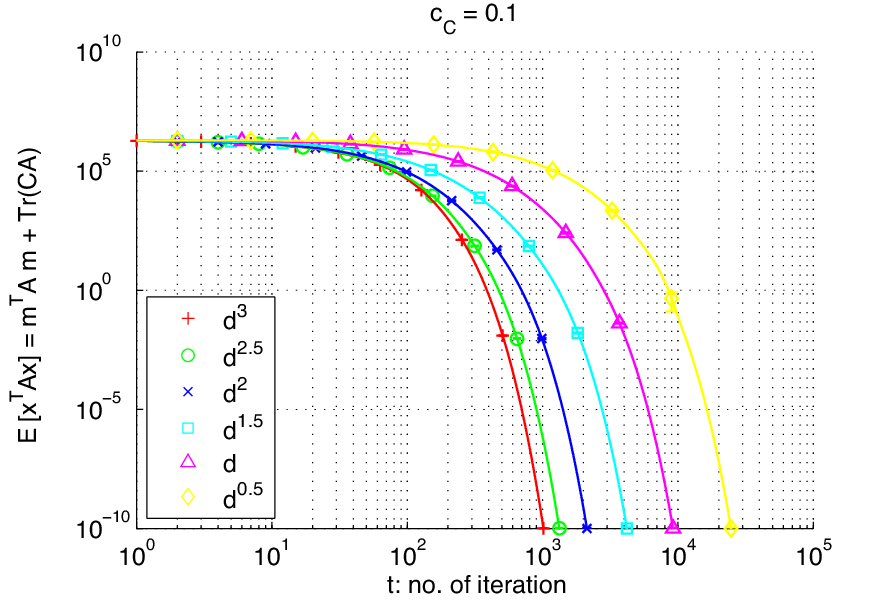}\\
\includegraphics[width=\figlambdasize]{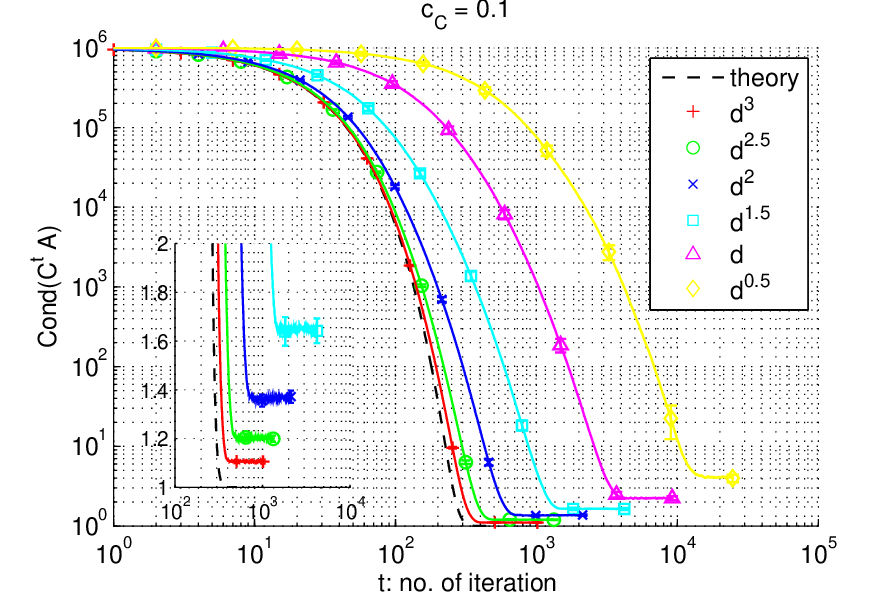}\\
\includegraphics[width=\figlambdasize]{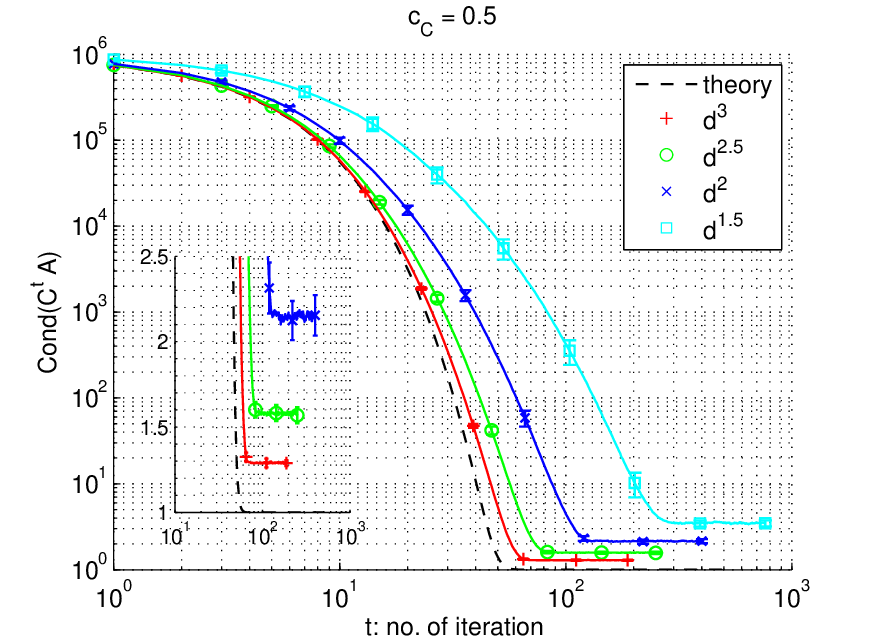}\\
\includegraphics[width=\figlambdasize]{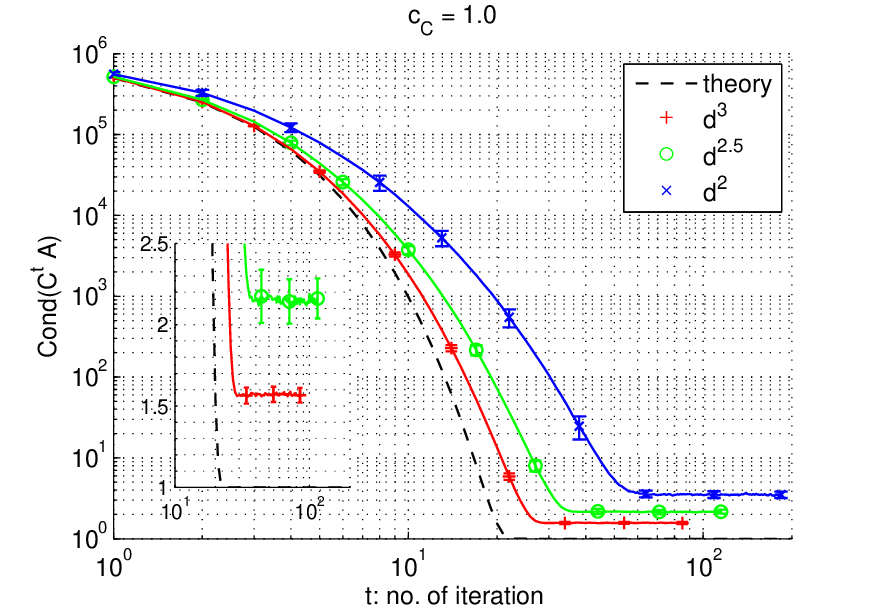}
\end{tabular}
\caption{Averages and standard deviations of the condition numbers $\Cond(C^{t}A)$ for $c_{C} = 0.1$, $0.5$, and $1.0$ and the expected objective function values for $c_{C} = 0.1$. Theoretical curves \eqref{eq:cond-sp} of the condition number are also illustrated with dashed lines. All the lines are cut after first reach of the expected objective value to $10^{-10}$. Some results are omitted, for example $n = d$ for $c_{C} = 0.5$, because numerically unstable computation occurs during search.}
\label{fig:lambda}
\end{figure}

First, we investigate the effect of the sample size $n$ and the coefficient $c_{C}$ of the learning rate $\eta_{C}^{t}$. We try the following sample sizes: $\lceil d^{1/2} \rceil$, $d$, $\lceil d^{3/2} \rceil$, $d^2$, $\lceil d^{5/2} \rceil$, $d^{3}$.

Figure~\ref{fig:lambda} illustrates the slope of the condition number $\Cond(C^{t} A)$ and the theoretical curve \eqref{eq:cond-sp} and the slope of the expected objective function value $\E_{X\sim P_{\theta^{t}}}[X^\trans A X] = (m^{t})^\trans A m^{t} + \Tr(C^{t}A)$, averaged over $50$ independent trials. When the sample size is larger, we see the closer performance to the theoretical result. When $n = d^{3}$ and $c_{C} = 0.1$, the convergence curve of the condition number approximated well the theoretical curve and the final condition number is $\Cond(C^{t} A) \approx 1.1$. When $n = \lceil d^{1/2} \rceil$ and $c_{C} = 0.1$, it takes more than $30$ times longer to learn the covariance matrix and the final condition number becomes $\Cond(C^{t} A) \approx 4.0$, although the stochastic algorithm still works successfully. We attain a little higher condition numbers when we choose larger learning rates $c_{C} = 0.5$, $1.0$. For example, the final condition numbers are $\Cond(C^{t} A) \approx 1.3$ for $n = d^{3}$ and $c_{C} = 0.5$, and $\Cond(C^{t} A) \approx 1.6$ for $n = d^{3}$ and $c_{C} = 1.0$. This is because smaller learning rates have more effect of averaging the natural gradient estimates over iterations and reducing the estimation variance.

Note that we observe a slightly slower adaptation of the covariance matrix at the beginning in case that we set $m^{0} = (10, \dots, 10)$, although the adaptation behavior \eqref{eq:cond-sp} does not change in theory. See Figure~\ref{fig:m}. This attributes to the estimation precision of $\Vfh$. If the squared Mahalanobis distance $(m^{t})^\trans (C^{t})^{-1} m^{t}$ between the origin (the global optimum) and the current mean with respect to $C^{t}$ is larger, the function landscape around $m^{t}$ looks more like linear function. Then $\Vfh(x_{i})$ are far from the exact values, especially in case a small sample size is chosen.

\begin{figure}[t]
\centering
\includegraphics[width=0.9\hsize]{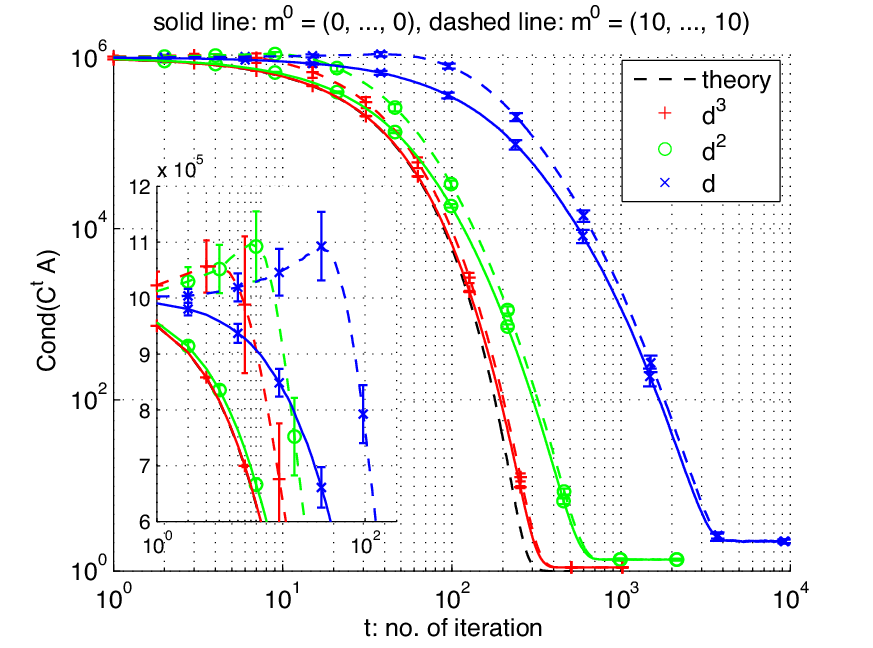}
\caption{Averages and standard deviations of the change of the condition numbers for $c_{C} = 0.1$ for $n = d$, $d^{2}$, $d^{3}$ with initial mean vector $m = (0, \dots, 0)$ or $m = (10, \dots, 10)$. Other settings are the same as in Figure~\ref{fig:lambda}.}
\label{fig:m}
\end{figure}

\subsection{Comparison with Rank-$\mu$ update CMA-ES}

Finally, we study how well this stochastic algorithm simulates the CMA-ES. We test the pure rank-$\mu$ update CMA-ES with weight scheme \eqref{eq:intermediate}. We set the learning rates following \cite{Hansen2004ppsn}
\begin{equation*}
\eta_{m}^{t} = 1, \qquad \eta_{C}^{t} = \frac{2\muw - 1}{(d + 2)^{2} + \muw},
\end{equation*}
where $\muw = 1 / \sum_{i=1}^{n} w_{i}^{2}$. We choose $c_{C}$ for our algorithm so that the speed of adaptation for each model is almost the same.

Figure~\ref{fig:cma} shows the results for each method for $n = d$ and $n = d^2$. In both case, we confirm similar behaviors of the pure rank-$\mu$ update CMA-ES and our algorithm despite their dissimilar weight-value settings. The similar change of performance illustrated in Figure~\ref{fig:m} is also observed for the pure rank-$\mu$ update CMA-ES. From this result, we conclude that it is possible to estimate the performance of the pure rank-$\mu$ update CMA-ES by our natural gradient algorithm, which is theoretically more attractive. 

However, note that the pure rank-$\mu$ update CMA-ES is not the standard CMA-ES \cite{Hansen2004ppsn} and the standard CMA-ES performs better than the pure rank-$\mu$ update CMA-ES. The standard CMA-ES employes so-called evolution paths to adapt the covariance matrix and the global scale of the covariance matrix, which is called step-size in the CMA-ES context. Moreover, the standard CMA-ES employes weighted recombination, where different values are assigned to the weights for $R_{i} \leq \lfloor n/2 \rfloor$, which is only slightly better than intermediate recombination \eqref{eq:intermediate} and even similar to our setting. Furthermore, the similar performance observed is only on a quadratic function. If there are certain functions which distinguish our algorithm from the (rank-$\mu$) CMA-ES, this may help to understand both the NGD algorithm and the CMA-ES. Further study on these topics is required.

\newcommand{\figcmasize}{0.48\hsize}
\begin{figure}[t]
\begin{tabular}{cc}
\begin{minipage}{\figcmasize}
\centering
\includegraphics[width=\hsize]{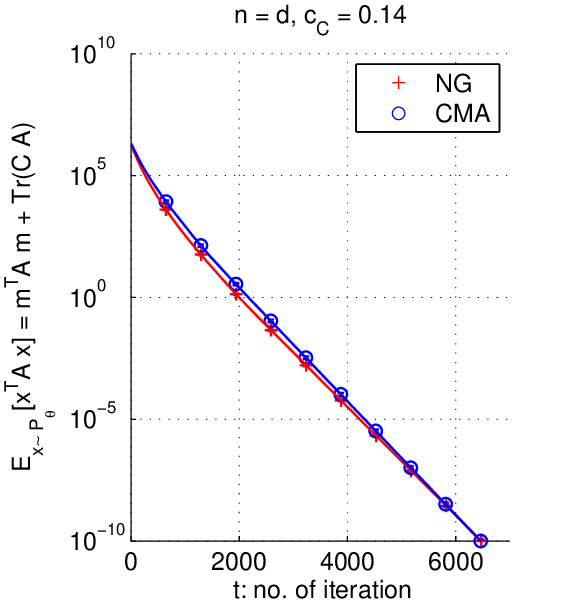}
\end{minipage}
\begin{minipage}{\figcmasize}
\centering
\includegraphics[width=\hsize]{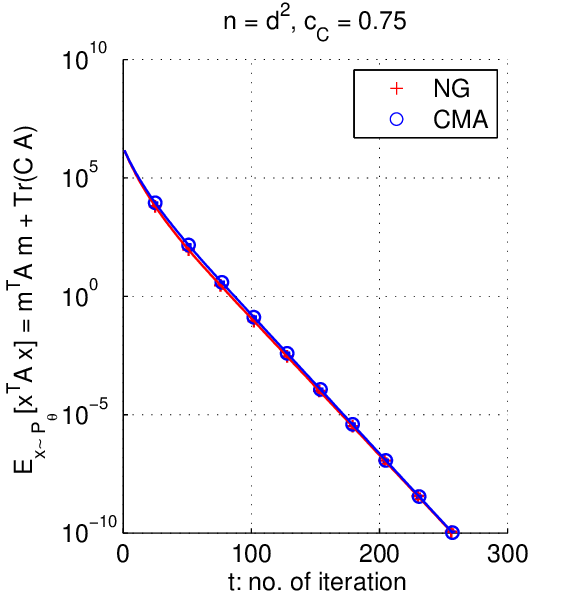}\\
\end{minipage}
\\
\begin{minipage}{\figcmasize}
\centering
\includegraphics[width=\hsize]{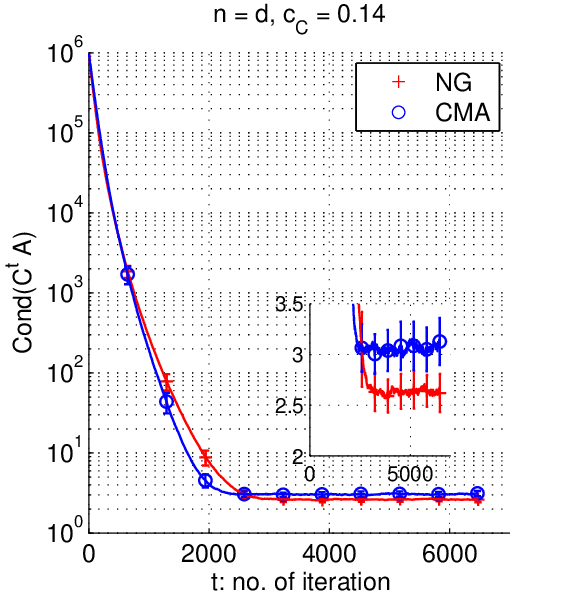}\\
\end{minipage}
\begin{minipage}{\figcmasize}
\centering
\includegraphics[width=\hsize]{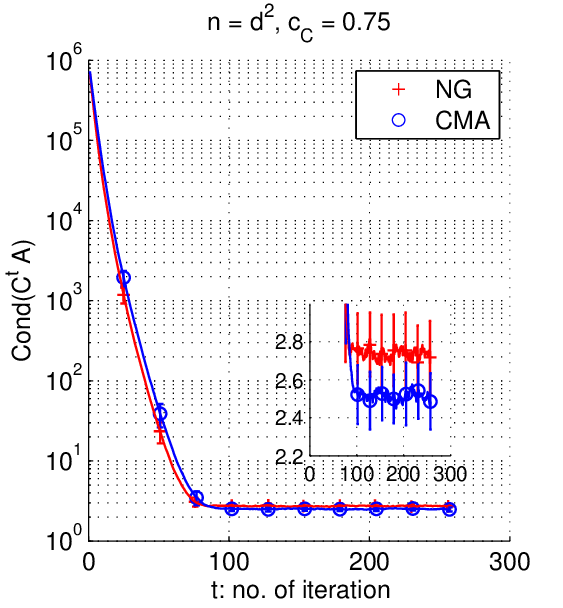}\\
\end{minipage}
\end{tabular}
\caption{Averages and standard deviations of the change of the condition numbers and the change of the expected objective function values for $n = d$ and $c_{C} = 0.14$ on the left, and for $n = d^{2}$ and $c_{C} = 0.75$ on the right. Other settings are the same as in Figure~\ref{fig:lambda}.}
\label{fig:cma}
\end{figure}

\section{Conclusion}\label{sec:conc}

We have proposed a novel natural gradient descent (NGD) method where the objective function is transformed to a function defined on the parameter space of probability distributions. We have proven that the deterministic NGD method learns the inverse of the Hessian of the original objective function that is any monotonic convex-quadratic-composite function. Linear convergence of the mean vector and the covariance matrix has been also proven. The numerical results for the stochastic NGD algorithm have shown that the stochastic algorithm approximates the deterministic one well when the sample size is sufficiently large. Moreover, we have confirmed that the stochastic NGD algorithm and the pure rank-$\mu$ update CMA-ES behave very similarly on a quadratic function. 

The contribution of the paper is to derive a novel NGD algorithm that can be viewed as a variant of the CMA-ES from the first principle of information geometry. This allows us to analyze the algorithm theoretically. Our theoretical results in Section~\ref{sec:conv} imply that there is at least one weight-value setting in the CMA-ES such that the covariance matrix learns the inverse of the Hessian of the objective function. Moreover, since our algorithm does not only share most of the important properties of the rank-$\mu$ update CMA-ES, but also is confirmed to perform similarly to the pure rank-$\mu$ update CMA-ES on a quadratic function by numerical simulations, we could study our algorithm to find out limitations of the pure rank-$\mu$ update CMA-ES and to discover a way to improve the CMA-ES. 


%


%
%
\end{document}